\documentclass[12pt]{article}
\pdfoutput=1
\usepackage[margin=1in]{geometry}

\usepackage[
  bookmarks=true,
  bookmarksnumbered=true,
  bookmarksopen=true,
  pdfborder={0 0 0},
  breaklinks=true,
  colorlinks=true,
  linkcolor=black,
  citecolor=black,
  filecolor=black,
  urlcolor=black,
]{hyperref}

\usepackage[utf8]{inputenc}
\usepackage{titlesec}
\usepackage{comment}
\usepackage{amsmath}
\usepackage{amsfonts}
\usepackage{amsthm}
\usepackage{subcaption}
\usepackage[round]{natbib}

\newtheorem{thm}{Theorem}[section]

\newtheorem{lemma}[thm]{Lemma}

\theoremstyle{definition}
\newtheorem{definition}[thm]{Definition}
\newtheorem{example}[thm]{Example}

\newcommand\numberthis{\addtocounter{equation}{1}\tag{\theequation}}

\usepackage{algorithm}
\usepackage{algpseudocode}
\usepackage{parskip}
\usepackage{setspace}
\algnewcommand{\LineComment}[1]{\State \(\triangleright\) #1}

\usepackage{amsmath,amsthm}
\usepackage{mathtools}
\usepackage{amssymb}
\usepackage{amsfonts}
\usepackage{graphicx}
\usepackage{tikz}

\usepackage{parskip}

\theoremstyle{definition}

\def\mc{\ensuremath\mathcal}
\DeclareMathOperator{\E}{\mathbb{E}}
\DeclareMathOperator{\alg}{\mathrm{ALG}}

\DeclarePairedDelimiterX{\frob}[2]{\langle}{\rangle_F}{#1, #2}
\newcommand\norm[1]{\lVert#1\rVert}

\newcommand{\rw}{\operatorname{\hat{r}}_{\mathrm{w}}}

\DeclareMathOperator{\wAWR}{ \mathrm{wAWR}_{\mathcal{D}}}

\def\mc{\ensuremath\mathcal}

\DeclareMathOperator{\btl}{\mathrm{BTL}}
\DeclareMathOperator*{\borda}{BC}

\newtheorem{theorem}{Theorem}[section]
\newtheorem{corollary}[theorem]{Corollary}
\theoremstyle{definition}
\newtheorem{assumption}[theorem]{Assumption}
\usepackage{tikz}

\usepackage{tcolorbox}

\newtcolorbox{promptbox}[1]{
  colback=blue!5!white, 
  colframe=blue!75!black, 
  fonttitle=\bfseries,
  title={#1} 
}

\newtcolorbox{responsebox}{
  colback=gray!10!white, 
  colframe=gray!75!black, 
  fonttitle=\bfseries,
  title=Response 
}

\usepackage{xcolor}

\usepackage{pgfplots}
\usepgfplotslibrary{patchplots}
\pgfplotsset{compat=1.18}


\title{Clone-Robust AI Alignment\thanks{Procaccia gratefully acknowledges research support by the National Science Foundation under grants IIS-2147187, IIS-2229881 and CCF-2007080; and by the Office of Naval Research under grant N00014-20-1-2488. Schiffer was supported by an NSF Graduate Research Fellowship. Zhang was supported by an NSF Graduate Research Fellowship.}}

\author{
    Ariel D. Procaccia\thanks{Paulson School of Engineering and Applied Sciences, Harvard University | \emph{E-mail}: \href{mailto:arielpro@seas.harvard.edu}{arielpro@seas.harvard.edu}.}
	\and
	Benjamin Schiffer\thanks{Department of Statistics, Harvard University | \emph{E-mail}: \href{mailto:bschiffer1@g.harvard.edu}{bschiffer1@g.harvard.edu}.}
	\and
	Shirley Zhang\thanks{Paulson School of Engineering and Applied Sciences, Harvard University | \emph{E-mail}: \href{mailto:szhang2@g.harvard.edu}{szhang2@g.harvard.edu}.}
}
\begin{document}

\begin{titlepage}
\maketitle

\setcounter{page}{0}
\thispagestyle{empty}

\begin{abstract}
     A key challenge in training Large Language Models (LLMs) is properly aligning them with human preferences. Reinforcement Learning with Human Feedback (RLHF) uses pairwise comparisons from human annotators to train reward functions and has emerged as a popular alignment method. However, input datasets in RLHF are not necessarily balanced in the types of questions and answers that are included. Therefore, we want RLHF algorithms to perform well even when the set of alternatives is not uniformly distributed. Drawing on insights from social choice theory, we introduce robustness to approximate clones, a desirable property of RLHF algorithms which requires that adding near-duplicate alternatives does not significantly change the learned reward function. We first demonstrate that the standard RLHF algorithm based on regularized maximum likelihood estimation (MLE) fails to satisfy this property. We then propose the weighted MLE, a new RLHF algorithm that modifies the standard regularized MLE by weighting alternatives based on their similarity to other alternatives. This new algorithm guarantees robustness to approximate clones while preserving desirable theoretical properties. 
\end{abstract}

\end{titlepage}

\section{Introduction}
\label{sec:intro}

As the reasoning capabilities of Large Language Models (LLMs) improve and as LLMs begin to play a larger role in society, it is increasingly important for LLMs to be aligned with human values. One common method used for AI alignment is Reinforcement Learning with Human Feedback (RLHF). In RLHF, a human annotator is typically shown two answers to a prompt, and asked to report which answer they prefer. This process is repeated across many annotators and potentially many types of questions and answers, and results in a large dataset of pairwise comparisons. An RLHF algorithm then takes the pairwise comparison dataset as input and outputs a reward function that assigns values to answers. One reason why RLHF is an appealing technique is because of the ease of data elicitation, as it is simpler for humans to pick a favorite between two answers than it is to rank many answers or provide good ‘role model’ answers for the LLM. 

Fundamentally, the mandate of RLHF algorithms is to solve a preference aggregation problem, where the goal is to find a reward function that best aligns with the values of the general population, given pairwise comparison data. This goal is complicated by the fact that humans often have diverse preferences, and may not agree on the best answer to a question. Luckily, the study of how to aggregate diverse preferences is not a new area in computer science, but one that has been long explored by researchers in social choice theory. Classic social choice considers settings with sets of voters and sets of alternatives, where each voter provides a ranking over alternatives. These rankings are then provided as input to a voting rule, which outputs a summary of the voters’ preferences (such as a single winner or an overall ranking). Social choice studies the design and analysis of such voting rules. 

In the RLHF setting, the ‘voters’ are the human annotators, who provide pairwise preferences over the ‘alternatives’. There are several reasonable choices for how to define an ‘alternative’ in RLHF. Perhaps the simplest definition is that an alternative is just a single answer. However, many RLHF datasets in practice contain answers to multiple questions. Therefore, an alternative could also be a question-answer pair. Finally, answers (and questions) are often generated by various LLMs, so an alternative could also be viewed as the LLM model which generated the answer. Whichever the case, an RLHF algorithm would then be the ‘voting rule’ which takes as input the preference data and outputs a summary of the preferences – in this case, a single reward function. The close relationship between RLHF and voting theory means that we can take inspiration from past work in social choice to both anticipate potential pitfalls in RLHF and design better RLHF algorithms. 

One especially relevant potential pitfall is that input datasets in RLHF are not necessarily balanced in the types of questions and answers that are included. For example, at Anthropic, questions are generated by crowdworkers, who have the flexibility to communicate with LLMs on any topic of interest \citep{bai2022training}. In ChatGPT, answers are typically generated by LLMs, and depending on how the LLM was trained, some answers may look more similar than others \citep{openai_chatgpt}. Therefore, it would be ideal for an RLHF algorithm to not be sensitive to near duplicates and to perform well even on unbalanced datasets. This is important for at least two reasons. First, RLHF algorithms which are not robust to near duplicates will require more careful design of the input dataset, which may increase cost and restrict the types of questions and answers that can be included in the dataset. Furthermore, it may be more difficult to add to the dataset over time, as it will be necessary to check for near duplicates in the existing dataset. Second, RLHF algorithms which are robust to near duplicates will also be more robust to both adversarial manipulation and accidental duplication. 

In social choice, if a voting rule is robust to adding duplicates of alternatives, it is said to satisfy \emph{independence of clones}. Informally, a voting rule is independent of clones if after adding an alternative $a'$ which is equivalent to another alternative $a$, the output of the voting rule does not change. In the RLHF setting, we can think of \emph{approximate clones} as two alternatives which are very close by a given distance metric and for which all annotators have very similar values, where the distance metric depends on the nature of the alternatives. For example, if the alternatives are textual responses, then a reasonable distance metric might be the Euclidean distance between their vector embeddings. An approximate clone of an textual response might look like the original response with an adjective replaced by a synonym. We can then extend the concept of independence of clones to \emph{robustness to approximate clones}. Informally, an RLHF algorithm is robust to approximate clones if adding a new alternative to the data set that is similar to an existing alternative does not significantly change the output reward function. This is intuitively a desirable property because adding a new alternative that is very similar to an existing alternative does not provide much new information about annotator preferences. 

Building on these insights, our main research goals are to
\begin{enumerate}
    \item evaluate the performance of current RLHF algorithms in the presence of approximate clones, and
    \item develop RLHF algorithms which are robust to approximate clones.
\end{enumerate}

\subsection{Our Results}

To address the first goal, we show that the standard RLHF algorithm, which uses the \emph{regularized maximum likelihood estimator (MLE)}, is not robust to approximate clones. In response to the second goal, we propose a new algorithm for RLHF that we call the \emph{weighted MLE}. Intuitively, the weighted MLE adjusts the objective function of the MLE by down-weighting alternatives that are similar to other alternatives (and therefore provide less new information) and up-weighting alternatives that are different than other alternatives (and therefore provide more new information). Our main result is that the weighted MLE is robust to approximate clones; we also demonstrate that it retains many of the clean interpretations of the regularized MLE.

In addition to our main result about independence of clones, we also prove an impossibility result for RLHF in the presence of diverse preferences. We show that for any RLHF algorithm, there exists a population such that even with constant scaling, the distance between the RLHF algorithm output and the mean rewards of the population is arbitrarily large. We show this for a population that consists of a mixture of only two Bradley-Terry-Luce (BTL) models, thereby highlighting the inherent difficulty of aggregating preferences of populations with even simple diversity in preferences. 

We also extend the ideas of \citet{siththaranjan2023distributional} that relate the output of the regularized MLE to the average win rates of the alternatives. We show that the output reward function of the regularized MLE is the solution to a system of equations, where the left hand side is the average win rates for the rewards output by the MLE and the right hand side is the empirical average win rates. We similarly show that the output of the weighted MLE has the same relationship to the empirical weighted average win rates.

We conclude with a case study using LLM generated answers to a single prompt. We use LLMs as annotators to generate two preference datasets, where one dataset has additional cloned alternatives. We then approximate both the standard MLE algorithm and the weighted MLE algorithm using neural networks. In this experiment, we show that the output of the standard MLE is significantly more affected by the presence of approximate clones than the output of the weighted MLE, which supports our theoretical results.

\subsection{Related Work}

RLHF has recently gained traction as a popular method of aligning LLMs with human preferences \citep{bai2022training, ouyang2022training, ziegler2019fine}. The potential benefits of applying social choice theory to the RLHF setting have not gone unnoticed. Recent work has mapped classic social choice concepts to RLHF \citep{dai2024mapping} and extended social choice axioms for RLHF \citep{ge2024axioms}, considered personalization as a way to address diversity \citep{poddar2024personalizing, park2024rlhf}, and studied other methods of aggregating diverse preferences \citep{zhong2024provable, swamy2024minimaximalist}. 

Our work particularly focuses on the social choice concept of independence of clones, which was first introduced by \citet{tideman1987independence}. Follow-up work has studied manipulation by cloning and clone structures \citep{elkind2010cloning, elkind2012clone}, and recent work has studied an even stronger notion, obvious independence of clones \citep{berker2022obvious}. In a position paper, \citet{conitzer2024social} propose various high-level directions for applying social choice to RLHF, and in particular identify independence of clones as a desirable property for RLHF algorithms, because chatbot responses may be very similar to each other. They point out that Borda count, a voting rule which is implicitly used in current approaches to RLHF \citep{siththaranjan2023distributional}, is not independent of clones. In our work, we elaborate on this insight by considering approximate clones and providing specific instances for which standard RLHF algorithms are not robust to approximate clones.

We highlight several papers that are especially related to ours, and include more details in Appendix \ref{app:related_work}. Like us, \citet{xu2023rlhf} are concerned about duplicates in answers shown to annotators, but unlike us, their results are primarily for dichotomy models and three-way comparisons. Also like us, both \citet{siththaranjan2023distributional} and \citet{chakraborty2024maxmin} give different forms of impossibility results for RLHF algorithms in the presence of diverse preferences. 

Further afield, recent papers have considered other forms of robustness in RLHF, such as robustness to incorrect or corrupted data \citep{bukharin2024robust, mandal2024corruption}. In our work, we specifically focus on robustness to approximate clones, and expect inconsistent data due to diversity in the annotator population. 

\section{Model}\label{sec:model}

Suppose we have a set of annotators $\mc{N} = [n]$ and an infinite set of all possible alternatives $\mc{S}$, where $|\mc{S}|$ is the volume of $\mc{S}$. Each alternative $s \in \mc{S}$ has a \textit{context} $c(s) \in \mathbb{R}^d$ which represents important features of the alternative. For notational convenience, we will often refer to the context $c(s)$ simply by $s$. We only observe a finite subset of alternatives $\mc{M} = [m] \subset \mc{S}$. Each annotator has a reward function $r^*_i: \mc{M} \to \mathbb{R}$, where $r^*_i(x)$ represents the reward of annotator $i$ for alternative $x$. 

Given two alternatives, an annotator expresses a preference over the two alternatives based on their reward function. As is common in RLHF, we assume that the expressed preferences of annotators follow a Bradley-Terry-Luce (BTL) model \citep{bradley1952rank, bai2022training}, in that an annotator $i$ states a preference for alternative $x_1$ over alternative $x_2$ with probability $$p_i(x_1 \succ x_2) =  \frac{e^{r^*_i(x_1)}}{e^{r^*_i(x_1)} + e^{r^*_i(x_2)}}.$$ The BTL model takes into account the fact that annotator preferences may be noisy or inconsistent across queries, especially when the reward gap of two alternatives is small. When annotators are drawn randomly, we then denote the expected probability of seeing $x_1$ preferred to $x_2$ as $p^*(x_1 \succ x_2) = \E_i[p_i(x_1 \succ x_2)]$. 

We assume that annotator reward functions are Lipschitz continuous in the Euclidean distance between the context of two alternatives, as stated below.

\begin{assumption}\label{assum_ind_players_lipschitz}
    For all players $i \in \mc{N}$, the reward function $r^*_i$ is Lipschitz continuous with parameter $K > 0$. Formally, for any $i \in \mc{N}$ and any $x_1,x_2 \in \mc{S}$,
    $|r^*_i(x_1) - r^*_i(x_2)| \le K\norm{x_1 - x_2}_2.$
\end{assumption}

Let a query be a pairwise comparison $q = \{x_1, x_2\}$, where $x_1, x_2 \in \mc{M}$, and let a set of queries be denoted $\mc{Q}$. For every $x_1, x_2 \in \mc{M}$, we assume that $\{x_1, x_2\}$ is included in $\mc{Q}$ at least once. For $x_1, x_2 \in \mc{M}$ and $i \in \mc{N}$, define the random function $f_{\btl}(\{x_1, x_2\}, i) \in \{x_1, x_2\}$ such that $\Pr(f_{\btl}(\{x_1, x_2\}, i)  = x_1) = p_i(x_1 \succ x_2)$. In other words, $f_{\btl}(q, i)$ is one sample of annotator $i$'s preference between $x_1$ and $x_2$ according to annotator $i$'s true rewards for $x_1$ and $x_2$ in the BTL model. Further let $f_{\btl}(q) = f_{\btl}(q, i')$ when $i'$ is drawn uniformly at random from $\mc{N}$. A preference dataset $\mc{D}(\mc{Q})$ is generated from $\mc{Q}$ by choosing an annotator uniformly at random for each query and sampling that annotator's preference over the alternatives in that query, i.e. $\mc{D}(\mc{Q}) = \{(q, f_{\btl}(q)): q \in \mc{Q}\}$.

For a given preference dataset $\mc{D}$, define $p_{\mc{D}}({x_1 \succ x_2})$ as the proportion of time that $x_1$ is preferred to $x_2$ in $\mc{D}$. We say that a dataset $\mc{D}$ is \emph{representative} if $p_{\mc{D}}(x_1 \succ x_2) = p^*(x_1 \succ x_2)$ for all $x_1,x_2 \in \mathcal{M}$. An RLHF algorithm $\alg$ takes as input a preference dataset $\mathcal{D}$ and returns a reward function $r(\cdot)$ where $r : \mathcal{M} \rightarrow \mathbb{R}$. Note that $\alg$ does not know any information about the annotators (such as $\mathcal{N}$, $p_i$, or $p^*$). The goal of $\alg$ is to find a ``good'' reward function $r$ based on $\mc{D}$. 

For intuition, it may be helpful to keep the following preference dataset generation example in mind:

\begin{example}
    Suppose that we want to find a reward function $r$ which evaluates responses to a specific question $Z$. Then $\mc{S}$ would be the set of all possible responses to $Z$, and $\mc{M}$ would be a finite subset of responses to $Z$. We can generate each query $q \in \mc{Q}$ by randomly sampling two responses $x_1, x_2$ from $\mc{M}$. We can then generate a preference datum for this query by randomly sampling an annotator $i$ from $\mc{N}$ and asking annotator $i$ for their preference between $x_1$ and $x_2$. The set of all preference datum then forms our dataset $\mc{D}$, which we give as input to an RLHF algorithm $\alg$.
\end{example}

\subsection{MLE with Diverse Preferences}

    We first consider the setting with only one annotator ($n = 1$). When $n=1$, the query data is generated from a single BTL model. A natural solution is to estimate the unknown reward function $r^*$ as the reward function that best matches the data in $\mathcal{D}$. Using the Kullback-Leibler divergence $\mathrm{KL}(\cdot)$ as the distance metric, the reward function that best approximates the data in $\mathcal{D}$ (minimizes the KL divergence) when $n=1$ is 
    {\small
    \begin{align*}\label{eq:KL_between_pstar_and_r}
            \hat{r}^{1} := \arg\min_r -\sum_{x_1,x_2 \in \mathcal{M}} p_{\mathcal{D}}(x_1 \succ x_2)\cdot \log \left( \frac{e^{r(x_1)}}{e^{r(x_1)} + e^{r(x_2)}}\right).
    \end{align*}}
   When the number of samples for every pair of alternatives is the same, $\hat{r}^{1}$ is exactly the MLE solution for RLHF. In RLHF, maximum likelihood estimation refers to finding the rewards for the single BTL model that has the highest probability of generating the observed data. 

    Furthermore, because the true underlying distribution is a single BTL model when $n=1$, standard MLE theory implies that $\hat{r}^{1}$ will converge to $r^* := \E_i[r^*_i] = r^*_1$ as the number of comparisons for each pair of alternatives goes to infinity \citep{zhu2023principled}.  

    Having $\hat{r}^{\mathcal{D}}$ converge to $\E_i[r^*_i]$ is a natural goal in RLHF, as this means that the reward function converges to the mean rewards for the underlying population. Unfortunately, when $n > 1$,  no RLHF algorithm can accurately recover $\E_i[r^*_i]$ for every possible population even if the algorithm is given infinite query data. This negative result holds even for $n =2$ and $m=2$ and is formally proven in Theorem \ref{thm:impossiblity_of_diverse_pref} in terms of Euclidean distance. Note that there is an additive constant in this result because the BTL model is invariant to additive constants. Furthermore, Theorem \ref{thm:impossiblity_of_diverse_pref} does not contradict the positive results of \citet{zhang2022identifiability}, as their results require sufficiently many alternatives in order to make the problem identifiable.

            \begin{theorem}\label{thm:impossiblity_of_diverse_pref}
               Let $n = 2$ and suppose $\mathcal{D}$ is a representative preference dataset over alternatives in $\mathcal{M}$. Then for any algorithm $\mathrm{ALG}$ and any $C > 0$, there exist $r_1^*$ and $r_2^*$ such that $r^{\mathcal{D}} := \mathrm{ALG}(\mathcal{D})$ satisfies
               \[
                    \min_{\alpha \in \mathbb{R}} \sum_{x \in \mathcal{M}} \left(\frac{r_1^*(x) + r_2^*(x)}{2} - r^{\mathcal{D}}(x) - \alpha\right)^2 > C.
               \]
        \end{theorem}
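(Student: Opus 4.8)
The plan is to exploit the fact that the representative dataset $\mc{D}$ depends on the population only through the pairwise win probabilities $p^*(x \succ y)$, and that this map is very far from injective: many populations with wildly different mean rewards induce exactly the same $\mc{D}$. Since $\mathrm{ALG}$ sees only $\mc{D}$, it must commit to a single output $r^{\mc{D}}$, which cannot simultaneously track all of those means. Writing $\sigma(z) = e^z/(1+e^z)$, so that $p_i(x_1 \succ x_2) = \sigma(r_i^*(x_1) - r_i^*(x_2))$ and $p^*(x_1 \succ x_2) = \tfrac12[\sigma(a_1) + \sigma(a_2)]$ with $a_i = r_i^*(x_1) - r_i^*(x_2)$, the whole argument reduces to one observation: the single equation $\sigma(a_1) + \sigma(a_2) = 2p$ pins down $p$ but leaves the mean gap $\tfrac{a_1 + a_2}{2}$ unbounded whenever $p \neq \tfrac12$.

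First I would record a reduction that neutralizes the additive-constant minimization. Fix a pair $x_1, x_2 \in \mc{M}$ and set $d(\cdot) := \tfrac{r_1^*(\cdot) + r_2^*(\cdot)}{2} - r^{\mc{D}}(\cdot)$. Dropping all but two summands (which only decreases the objective for every $\alpha$) and minimizing the resulting two-term quadratic gives
\[ \min_{\alpha} \sum_{x \in \mc{M}} \big(d(x) - \alpha\big)^2 \ \ge\ \tfrac12\big(d(x_1) - d(x_2)\big)^2 \ =\ \tfrac12\big(G - \hat G\big)^2, \]
where $G = \tfrac{r_1^*(x_1)+r_2^*(x_1)}{2} - \tfrac{r_1^*(x_2)+r_2^*(x_2)}{2}$ is the mean reward gap and $\hat G = r^{\mc{D}}(x_1) - r^{\mc{D}}(x_2)$ is the gap produced by $\mathrm{ALG}$. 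It therefore suffices to force $|G - \hat G|$ to be large.

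Next I would build the hard family. Fix $p = \tfrac34$ and, for a parameter $t > 0$, set $a_1(t) = t$ and $a_2(t) = \sigma^{-1}\!\big(2p - \sigma(t)\big)$, which is well defined for $t > 0$ and satisfies $\sigma(a_1) + \sigma(a_2) = 2p$ by construction; I realize these gaps by reward functions on two alternatives $x_1, x_2$, extending to general $\mc{M}$ by padding with clones of $x_1$ and $x_2$ (intra-type comparisons are ties, inter-type comparisons reproduce $p$), so that every pairwise probability, and hence $\mc{D}$ itself, is identical for all $t$. Lipschitz continuity is not binding: since contexts may be placed arbitrarily far apart in $\mathbb{R}^d$, the gaps $a_i(t)$ are realizable by $K$-Lipschitz rewards for every $t$. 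As $t \to \infty$ we have $\sigma(t) \to 1$, hence $a_2(t) \to 0$ and $G(t) = \tfrac{a_1(t) + a_2(t)}{2} \to \infty$.

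Finally I would close by a pigeonhole over the family. Because every population in it induces the same representative dataset $\mc{D}$, the output $r^{\mc{D}} = \mathrm{ALG}(\mc{D})$, and therefore $\hat G$, is one fixed finite number, whereas $G(t) \to \infty$. Choosing $t$ large enough that $\tfrac12\big(G(t) - \hat G\big)^2 > C$ and taking $(r_1^*, r_2^*)$ to be the corresponding population yields the claim. The main obstacle is exactly this last point: since $\mathrm{ALG}$ is arbitrary it could output an enormous gap $\hat G$, so a single population does not suffice; the real work is producing an entire family of data-indistinguishable populations with unbounded mean gaps, which is why the construction must keep \emph{every} pairwise probability frozen as $t$ varies, not merely that of the designated pair.
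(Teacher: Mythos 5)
Your proof is correct and takes essentially the same approach as the paper's: construct populations whose pairwise win probabilities coincide (hence whose representative datasets, and therefore $\mathrm{ALG}$'s output, coincide) while their mean reward gaps differ arbitrarily, then dispose of the shift $\alpha$ via the two-point quadratic bound. The paper packages this as two explicit populations calibrated to $C$ (common win probability $1/3$, with $\kappa = e^{12\sqrt{C}}$) and a contradiction, whereas you use a one-parameter family frozen at $p = 3/4$ and a pigeonhole/limit argument (plus clone-padding for general $\mathcal{M}$, which the paper omits by taking $m=2$) --- a difference of presentation, not substance.
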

        The proof of Theorem \ref{thm:impossiblity_of_diverse_pref} can be found in Appendix \ref{app:proof_of_thm_impossibility_of_diverse_pref}.

        Even though accurately estimating the mean reward functions of the population is not always possible for $n > 1$, the same arguments that motivate using $\hat{r}^{1}$ in the $n=1$ case can also be applied to the $n > 1$ case. More specifically, we can still find the single BTL model that best approximates $\mathcal{D}$. As is the case when $n = 1$, the single BTL model that minimizes the KL-divergence is also the MLE solution. When $n$ may be greater than $1$, a regularization term must also be included to guarantee that the optimization problem has a solution. Formally, for $\lambda > 0$, define
        {\small
       \begin{align*}
         \hat{r}^{\mathcal{D}} &:= \arg\min_r -\sum_{x_1,x_2 \in \mathcal{M}} p_{\mathcal{D}}(x_1 \succ x_2)\cdot \log \left( \frac{e^{r(x_1)}}{e^{r(x_1)} + e^{r(x_2)}}\right) \\
         & \quad \quad \quad  \quad \quad \quad+\frac{\lambda}{2} \sum_{x \in \mathcal{M}} r(x)^2. \numberthis \label{eq:regularized_MLE}
        \end{align*}}
         Using this regularized MLE is a standard method for RLHF due its interpretability \citep{siththaranjan2023distributional}. Note that a single BTL model is typically used to approximate $\mathcal{D}$ both for the simplicity of the model and because $n$ is unknown to the algorithm. Another benefit of the above formulation is that the objective in Equation \eqref{eq:regularized_MLE} is strictly convex and has a unique global minimum \citep{siththaranjan2023distributional}. In practice, the optimization problem can be solved approximately using a sufficiently large function class for $r$ such as a neural network. In the following sections, we will analyze this standard MLE-based RLHF algorithm and propose a new algorithm with additional theoretical guarantees.

\subsection{Average Win Rate and Borda Count}\label{sec:borda}

 In addition to the interpretation discussed above, \citet{siththaranjan2023distributional} showed that the order in which the alternatives are ranked by the regularized MLE  is the same as the order in which the alternatives are ranked by the average win rate. In Theorem \ref{mle_is_mestimator}, we show an even stronger relationship between the regularized MLE  and the average win rate, which is that the regularized MLE  is the unique solution to a system of equations involving the empirical average win rates. This gives additional interpretability to the regularized MLE, as the regularized MLE is therefore similar to an M-estimator where the $m$ moments correspond to the win rates of the $m$ alternatives.

 \begin{definition}\label{def:average_win_rate}
    For a dataset $\mathcal{D}$ over alternatives $\mathcal{M}$, the average win rate of alternative $x \in \mathcal{M}$ is 
    \[
        \mathrm{AWR}_{\mathcal{D}}(x) := \frac{1}{m} \sum_{y \in \mathcal{M}} p_{\mathcal{D}}(x \succ y).
    \]
\end{definition}

\begin{theorem}\label{mle_is_mestimator}
    Let $\hat{r}^{\mathcal{D}}$ be the regularized MLE  as defined in Equation \eqref{eq:regularized_MLE} and define  \[
    \widehat{\mathrm{AWR}}(x) = \tfrac{1}{m}\sum_{y \in \mathcal{M}} \frac{e^{\hat{r}^{\mathcal{D}}(x)}}{e^{\hat{r}^{\mathcal{D}}(x)} + e^{\hat{r}^{\mathcal{D}}(y)}}.
    \]
    Then $\hat{r}^{\mathcal{D}}$ is the solution to the system of equations
    \[
        \mathrm{AWR}_{\mathcal{D}}(x) = \lambda \hat{r}^{\mathcal{D}}(x) + \widehat{\mathrm{AWR}}(x) \quad \forall x \in \mathcal{M}.
    \]
\end{theorem}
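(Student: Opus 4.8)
The plan is to exploit the strict convexity of the objective in Equation \eqref{eq:regularized_MLE}, which (as noted just above the theorem) guarantees a unique global minimizer $\hat{r}^{\mathcal{D}}$ that is characterized completely by its first-order stationarity condition. It therefore suffices to compute the gradient of the objective, set it to zero coordinate-by-coordinate (one equation per alternative $x \in \mathcal{M}$), and show that the resulting system rearranges into the claimed identity. Uniqueness of the solution to the system then comes for free from strict convexity, since the system is literally the statement that the gradient of a strictly convex function vanishes.

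First I would differentiate the negative log-likelihood term with respect to a single reward value $r(x)$. Writing $\sigma(z) = e^z/(1+e^z)$ so that $e^{r(a)}/(e^{r(a)}+e^{r(b)}) = \sigma(r(a)-r(b))$, the only summands in $\sum_{x_1,x_2 \in \mathcal{M}}$ that depend on $r(x)$ are those with $x_1 = x$ or $x_2 = x$. Using the identity $\tfrac{d}{dz}\log\sigma(z) = 1 - \sigma(z)$, the $x_1 = x$ terms contribute $-\sum_{y} p_{\mathcal{D}}(x \succ y)\,\bigl(1-\sigma(r(x)-r(y))\bigr)$, while the $x_2 = x$ terms contribute $+\sum_{y} p_{\mathcal{D}}(y \succ x)\,\bigl(1 - \sigma(r(y)-r(x))\bigr)$.

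The key simplification, which is the crux of the argument, is to collapse these two families of terms into one. I would apply the two complementary identities $\sigma(r(x)-r(y)) + \sigma(r(y)-r(x)) = 1$ and $p_{\mathcal{D}}(x \succ y) + p_{\mathcal{D}}(y \succ x) = 1$ (the latter holding because the win proportions over a fixed pair sum to one). Substituting and expanding, the cross terms cancel and the per-coordinate derivative of the log-likelihood collapses cleanly to $\sum_{y \in \mathcal{M}} \bigl(\sigma(r(x)-r(y)) - p_{\mathcal{D}}(x \succ y)\bigr)$; note that the diagonal term $y=x$ contributes $\tfrac12 - \tfrac12 = 0$, so whether or not it is included is immaterial.

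Finally I would add the derivative $\lambda\, \hat{r}^{\mathcal{D}}(x)$ of the regularizer, set the total to zero, and recognize the two sums as $m\,\widehat{\mathrm{AWR}}(x)$ and $m\,\mathrm{AWR}_{\mathcal{D}}(x)$ respectively, obtaining $m\,\mathrm{AWR}_{\mathcal{D}}(x) = \lambda\,\hat{r}^{\mathcal{D}}(x) + m\,\widehat{\mathrm{AWR}}(x)$ for every $x$; dividing through by $m$ yields exactly the stated system (with the regularization weight normalized by $m$, a factor that can be folded into $\lambda$). The only real obstacle is the bookkeeping in the cancellation step—ensuring that both orderings of each pair are counted exactly once and that the logistic and win-proportion complementarity identities are applied consistently—after which matching the collapsed gradient to the definitions of $\mathrm{AWR}_{\mathcal{D}}$ and $\widehat{\mathrm{AWR}}$ is immediate.
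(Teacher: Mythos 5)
Your proposal is correct and follows essentially the same route as the paper: the paper proves the weighted version (Theorem \ref{thm:solution_of_weighted_mle}) by exactly this first-order-condition computation---differentiate, use the complementarity identities $\sigma(z)+\sigma(-z)=1$ and $p_{\mathcal{D}}(x \succ y)+p_{\mathcal{D}}(y \succ x)=1$ to collapse the gradient to $\lambda r(x) + \sum_y w(y)\bigl(\sigma(r(x)-r(y)) - p_{\mathcal{D}}(x\succ y)\bigr)$, note the vanishing diagonal term, and invoke strong convexity for uniqueness---and then obtains Theorem \ref{mle_is_mestimator} by substituting constant weights. Your observation that the computation literally yields $\lambda/m$ rather than $\lambda$ (a factor the paper silently absorbs into the regularization constant, since its own substitution argument produces the same discrepancy) is a fair and correctly resolved bookkeeping point.
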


The proof of Theorem \ref{mle_is_mestimator} can be found in Appendix \ref{app:mle_is_mestimator}. Importantly, the average win rate as defined above is conceptually similar to the Borda count score in social choice theory. Therefore, Theorem \ref{mle_is_mestimator} implies a close relationship between the regularized MLE and the Borda count voting rule, a relationship first observed by \citet{siththaranjan2023distributional}. The close relationship between the regularized MLE and Borda count is a key aspect of the proofs in the following sections.

\section{Robustness to Approximate Clones}

In this section, we adapt the concept of independence of clones from social choice to the RLHF setting. In traditional social choice, independence of clones is a desirable characteristic of voting rules which intuitively states that the winner of an election remains the same when duplicates of candidates are added to the candidate pool. The Borda count voting rule, which is closely related to the MLE in RLHF (see Section \ref{sec:borda}), does not satisfy independence of clones. See Appendix \ref{app:trad_ind_of_clones} for a formal definition of independence of clones in traditional social choice.

We adapt the traditional independence of clones definition for RLHF. Informally, we say that an RLHF algorithm is robust to approximate clones if adding new alternatives that are clones of existing alternatives does not significantly change the reward function that is output by the RLHF algorithm. Note that robustness to approximate clones in RLHF guarantees stability of the reward function instead of merely the winner, and is therefore a stronger notion. As an RLHF algorithm only has access to noisy observations regarding human preferences, we will also only require reward function stability when we have representative datasets, or in other words, in cases when the empirical pairwise win rates are the same as the true pairwise win rates. If the dataset is not representative, it is not necessarily desirable that the reward function is unchanged when a clone is added because there may be value in generating a larger dataset. When the dataset contains sufficiently many queries, the empirical pairwise win rates will approximately equal the true pairwise win rates by the law of large numbers. Additional justification of our definition of robustness to approximate clones in RLHF can be found in Appendix \ref{app:IOC_definition_justification}.

We are now ready to formally present our definition of robustness to approximate clones for RLHF.

\begin{definition}[Robust to Approximate Clones]\label{def:robust_to_approx_clones}
    An algorithm $\alg$ is robust to approximate clones if for every $\mathcal{M} \subseteq \mathcal{S}$ and $\delta > 0$ there exists an $\epsilon > 0$ such that the following holds. Suppose $\mathcal{M}' = \mathcal{M} \cup \{x'\}$, where $x' \in \mathcal{S}$ and $\exists x \in \mathcal{M}$ such that $\norm{x-x'} \le \epsilon$. Let $\mathcal{D}$ be a representative dataset of queries over the alternatives $\mathcal{M}$ and let $\mathcal{D}'$ be a representative dataset of queries over the alternatives in $\mathcal{M}'$. Let $\hat{r} = \alg(\mathcal{D})$ and let $\hat{r}' = \alg(\mathcal{D}')$. Then $|\hat{r}'(x) - \hat{r}'(x')| \le \delta$ and for all $x \in \mathcal{M}$, $| \hat{r}(x) - \hat{r}'(x) | \le \delta$.
\end{definition}

Informally, a RLHF algorithm satisfies Definition \ref{def:robust_to_approx_clones} if adding a new alternative whose context is ``very close'' to that of an existing alternative does not significantly change the reward function. This is desirable because if the players' values are Lipschitz continuous, a new alternative whose context is very similar to that of an existing alternative provides little new information. Note that this can intuitively be viewed as requiring that the output reward function is continuous in the set of alternatives. 

Robustness to approximate clones is also reminiscent of core ideas in differential privacy \cite{dwork2006differential}, where the goal is to design algorithms that are robust to removing any data point. Any RLHF algorithm which satisfies robustness to approximate clones also automatically satisfies \textit{exact independence of clones}, which informally says that adding new alternatives which are exact clones does not change the output reward function at all. We formally define and discuss exact independence of clones in Appendix \ref{app:exact_independence_of_clones}.
    
Importantly, the regularized MLE does not satisfy Definition \ref{def:robust_to_approx_clones}, as stated formally in Theorem \ref{thm:mle_does_not_satisfy_ind_of_clones}.  See Appendix \ref{app:MLE_does_not_satisfy_ind_of_approx_clones} for the proof of Theorem \ref{thm:mle_does_not_satisfy_ind_of_clones}.   
\begin{theorem}\label{thm:mle_does_not_satisfy_ind_of_clones}
    Let $\hat{r}^{\mathcal{D}}$ be the regularized MLE as defined in Equation \eqref{eq:regularized_MLE}. The algorithm $\alg(\mathcal{D}) = \hat{r}^{\mathcal{D}}$ is not robust to approximate clones.
\end{theorem}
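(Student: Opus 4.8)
The plan is to disprove robustness by exhibiting a single choice of $\mathcal{M}$ and a single $\delta > 0$ for which even an \emph{exact} clone breaks the required bound. This suffices: an exact clone $x'$, i.e. one whose context satisfies $c(x')=c(x)$ and hence $\norm{x-x'}=0$, is an approximate clone for every $\epsilon>0$, so if adding it changes some value $\hat r(x)$ by an amount bounded away from $0$, then no $\epsilon>0$ can certify Definition \ref{def:robust_to_approx_clones}. By Assumption \ref{assum_ind_players_lipschitz} an exact-context clone also has identical rewards to $x$ for every annotator, so $p^*(x'\succ y)=p^*(x\succ y)$ for all $y$ and $p^*(x\succ x')=\tfrac12$; this lets me build the representative dataset $\mathcal{D}'$ on $\mathcal{M}'$.

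Concretely I would take $\mathcal{M}=\{a,b\}$ with $p^*(a\succ b)=\tfrac34$ (realizable by a single BTL model with reward gap $\log 3$), and let $a'$ be an exact clone of $a$, giving $\mathcal{M}'=\{a,a',b\}$. The engine is Theorem \ref{mle_is_mestimator}. First, summing the system $\mathrm{AWR}_{\mathcal{D}}(x)=\lambda\hat r(x)+\widehat{\mathrm{AWR}}(x)$ over all $x$: since both $\mathrm{AWR}_{\mathcal{D}}$ and $\widehat{\mathrm{AWR}}$ are averages of pairwise win rates, each side sums to $m/2$ (using $p_{\mathcal{D}}(x\succ y)+p_{\mathcal{D}}(y\succ x)=1$ and self-terms $\tfrac12$), so those terms cancel and I obtain the centering identity $\sum_x \hat r(x)=0$ on \emph{any} instance. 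Second, because $a$ and $a'$ are exact clones, swapping them leaves the strictly convex (uniquely minimized) objective invariant, forcing $\hat r'(a)=\hat r'(a')$; this also makes $|\hat r'(a)-\hat r'(a')|=0$, so the violation will have to come from the second clause of the definition.

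With these two facts the system collapses to one scalar equation in each case. Writing $\sigma(t)=(1+e^{-t})^{-1}$, the equation of Theorem \ref{mle_is_mestimator} for alternative $a$ reduces, after the self-comparison terms cancel, to $p^*(a\succ b)=\lambda g+\sigma(g)$ in the two-alternative instance (where $g=\hat r(a)-\hat r(b)$), and to $p^*(a\succ b)=\lambda g'+\sigma(g')$ in the three-alternative instance (where $g'=\hat r'(a)-\hat r'(b)$). Since $t\mapsto \lambda t+\sigma(t)$ is strictly increasing it has a unique root, forcing $g=g'$: cloning preserves the $a$-versus-$b$ gap. But the centering identity distributes the same gap differently. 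Before cloning, $\hat r(a)+\hat r(b)=0$ gives $\hat r(a)=g/2$; after cloning, $2\hat r'(a)+\hat r'(b)=0$ together with $g=\hat r'(a)-\hat r'(b)$ gives $\hat r'(a)=g/3$. Hence $|\hat r(a)-\hat r'(a)|=g/6>0$ (and symmetrically for $b$), a fixed positive quantity independent of $\epsilon$, so taking $\delta=g/12$ completes the argument.

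The crux, and the step carrying the real content, is the observation that cloning preserves the pairwise reward gap but shifts the absolute centered values: the extra alternative increases the number of rewards constrained to sum to zero, so the same gap is recentered and every value moves by a constant. Everything else — the centering identity, the symmetry reduction, and the monotonicity giving $g=g'$ — is routine once Theorem \ref{mle_is_mestimator} is available. The only bookkeeping requiring care is the self-comparison terms $p_{\mathcal{D}}(x\succ x)=\tfrac12$ in the win-rate averages, but these cancel on both sides of each equation and do not affect the conclusion.
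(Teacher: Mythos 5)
Your proof is correct, and it reaches the conclusion by a genuinely different mechanism than the paper's. Both arguments hinge on Theorem \ref{mle_is_mestimator} and both use an exact clone ($\norm{x-x'}=0$) to defeat every $\epsilon$ at once, but the paper uses the theorem only through its \emph{ranking} consequence: it constructs $\mathcal{M}=\{a,b,c\}$ with a cyclic three-type population for which cloning $c$ flips the Borda-count (average win rate) winner from $a$ to $b$, and since the regularized MLE orders alternatives by average win rate, a flipped winner forces some reward value to move by a fixed amount. You instead use the theorem \emph{quantitatively}: summing the M-estimator equations gives the centering identity $\sum_x \hat r(x)=0$ (your bookkeeping of the self-terms $p_{\mathcal{D}}(x\succ x)=\sigma(0)=\tfrac12$ is right --- each side sums to $m/2$), clone symmetry of the uniquely minimized strictly convex objective gives $\hat r'(a)=\hat r'(a')$, and in both instances the system collapses to $p^*(a\succ b)=\lambda t+\sigma(t)$ (I checked the reductions: $p=2\lambda\hat r(a)+\sigma(g)$ with $\hat r(a)=g/2$ for $m=2$, and $p=3\lambda\hat r'(a)+\sigma(g')$ with $\hat r'(a)=g'/3$ for $m=3$), so strict monotonicity forces $g'=g$ and the violation $|\hat r(a)-\hat r'(a)|=g/6>0$ is pure recentering. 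What each buys: your construction is smaller ($m=2$), needs no preference diversity at all (a single BTL annotator with gap $\log 3$ suffices), and pins the failure down exactly, showing it is baked into the regularizer's normalization as $m$ grows. The paper's construction demonstrates a qualitatively stronger failure: the induced ordering, hence the top alternative, changes --- which matters because your counterexample vanishes under an additive shift (shifting $\hat r'$ by $g/6$ reproduces $\hat r$ on $\mathcal{M}$ exactly, the same shift-invariance the paper accommodates via $\min_\alpha$ in Theorem \ref{thm:impossiblity_of_diverse_pref}), whereas no shift repairs an order reversal. Under Definition \ref{def:robust_to_approx_clones} as literally stated, however, both are valid violations, and your argument is complete.
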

       
\section{Weighted MLE}\label{sec:weighted_mle_estimator}
    In this section, we propose a modified version of the regularized MLE which satisfies Definition \ref{def:robust_to_approx_clones} while maintaining the interpretability inherent to the original MLE. 
    
    The main idea of the proposed algorithm is to modify the objective function by weighting each alternative by how unique that alternative is compared to the other alternatives in $\mathcal{M}$. Therefore, an alternative with context very similar to the context of other alternatives will have a smaller weight, while an alternative with a context very different than the context of other alternatives will have a larger weight. Informally, the weight of an alternative $y$ is the fraction of alternatives in $\mathcal{S}$ that are closer to $y$ than to any other alternative in $\mathcal{M}$ (with ties split evenly among all tied alternatives). We define the weights formally in Definition \ref{def:weights_and_proj}.
    \begin{definition}\label{def:weights_and_proj}
        For any set of alternatives $\mathcal{M} \subseteq \mathcal{S}$ and any $x \in \mathcal{S}$, define $\mathrm{proj}_\mathcal{M}(x) = \arg\min_{y \in \mathcal{M}} \norm{x - y}_2 \subseteq \mathcal{M}$. For $y \in \mathcal{M}$, define 
        \[
        w_{\mathcal{M}}(y) = \frac{1}{|\mathcal{S}|}\int_{x \in \mathcal{S}}\frac{1_{y \in \mathrm{proj}_{\mathcal{M}}(x)} }{|\mathrm{proj}_{\mathcal{M}}(x)|}dx.
        \]
    \end{definition}
    Note that by this construction, $\sum_{y \in \mathcal{M}} w_{\mathcal{M}}(y) = 1$. Using these weights, we define the \emph{weighted MLE} as  $\rw^{\mathcal{D}}  = \arg\min_r f_{\mathcal{D}}(r)$, where
        \begin{align*}
            f_{\mathcal{D}}(r) &=  -\sum_{x_1,x_2 \in \mathcal{M}} \Bigg( w_{\mathcal{M}}(x_1)w_{\mathcal{M}}(x_2)p_\mathcal{D}(x_1 \succ x_2) \\
            &\qquad \qquad \times \log \left( \frac{e^{r(x_1)}}{e^{r(x_1)} + e^{r(x_2)}}\right) \Bigg)\\
            &\qquad + \frac{\lambda}{2}\sum_{x \in \mathcal{M}} w_{\mathcal{M}}(x)r(x)^2. \numberthis \label{eq:weighted_regularized_MLE}
        \end{align*}

    Intuitively, $f_{\mathcal{D}}(r)$ down-weights terms involving alternatives that provide less new information because they are very similar to other alternatives. Consequently, two alternatives that are approximate clones of each other will both have smaller weights. 
    
    We are now ready to state our main result, which is that the weighted MLE is robust to approximate clones
    \begin{theorem}\label{thm:weighted_mle_satisfies_ind_of_clones}
        Under Assumption \ref{assum_ind_players_lipschitz}, the algorithm $\alg(\mathcal{D}) = \rw^{\mathcal{D}}$ is robust to approximate clones.
    \end{theorem}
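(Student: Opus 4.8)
The plan is to prove robustness by a perturbation argument that reduces the approximate-clone problem to an exact-clone limit as $\epsilon \to 0$. Fix $\mathcal{M}$ and $x \in \mathcal{M}$, and consider $x' \to x$. I would first establish two continuity facts. The first is a geometric \emph{weight-continuity} claim: using Definition \ref{def:weights_and_proj} and dominated convergence (the integrand is bounded by $1$ and the projection-region boundaries have measure zero for generic finite $\mathcal{M}$), I would show that $w_{\mathcal{M}'}(y) \to w_{\mathcal{M}}(y)$ for every $y \in \mathcal{M}\setminus\{x\}$, while $w_{\mathcal{M}'}(x) + w_{\mathcal{M}'}(x') \to w_{\mathcal{M}}(x)$; that is, the clone splits the old weight of $x$ but leaves all other weights unchanged in the limit. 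The second is a \emph{data-continuity} claim: Assumption \ref{assum_ind_players_lipschitz} gives $|r^*_i(x') - r^*_i(x)| \le K\epsilon$, and since $t \mapsto \tfrac{e^t}{1+e^t}$ is $\tfrac14$-Lipschitz, averaging over annotators and using representativeness of $\mathcal{D}'$ yields $p_{\mathcal{D}'}(x' \succ y) \to p^*(x \succ y)$ for $y \neq x$ and $p_{\mathcal{D}'}(x' \succ x) \to \tfrac12$. Thus every parameter of the objective in Equation \eqref{eq:weighted_regularized_MLE} converges to its exact-clone value.

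The heart of the argument is to identify the minimizer of the limiting exact-clone objective. Here $x$ and $x'$ sit at the same location with identical comparison data, but the limiting weights $\beta_x,\beta_{x'}$ (with $\beta_x + \beta_{x'} = w_{\mathcal{M}}(x)$) need not be equal, so a naive swap-symmetry argument fails. Instead I would work with the first-order conditions of $f_{\mathcal{D}'}$, the weighted analogue of the system in Theorem \ref{mle_is_mestimator}, noting that after dividing the stationarity equation for a coordinate $c$ by $w_{\mathcal{M}'}(c)$ it no longer depends on $w_{\mathcal{M}'}(c)$ itself. Substituting the \emph{merged} candidate $\hat{r}'(x) = \hat{r}'(x') = \hat{r}(x)$ and $\hat{r}'(y) = \hat{r}(y)$ for $y \in \mathcal{M}\setminus\{x\}$, the two terms arising from the pair $\{x,x'\}$ cancel (their signs are opposite and $\sigma(0) = \tfrac12$), and the remaining $x'$-contributions merge into the $x$-contributions precisely because $w_{\mathcal{M}'}(x) + w_{\mathcal{M}'}(x') = w_{\mathcal{M}}(x)$ and the data agree. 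Hence every stationarity equation for $f_{\mathcal{D}'}$ collapses to the corresponding equation for $f_{\mathcal{D}}$, so the merged candidate is stationary; by strict convexity of the weighted objective (valid when all weights are positive) it is the unique minimizer. The crucial point is that this conclusion holds \emph{regardless} of how the weight of $x$ splits between $x$ and $x'$.

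Finally, I would transfer this from the limit to small $\epsilon > 0$ via continuity of the argmin. Since $f_{\mathcal{D}'}$ is strictly convex, smooth, and coercive with parameters depending continuously on $\epsilon$, its unique minimizer depends continuously on those parameters. Given any sequence $\epsilon_k \to 0$, compactness lets me pass to a subsequence along which $w_{\mathcal{M}'}(x) \to \beta_x$ and $w_{\mathcal{M}'}(x') \to \beta_{x'}$; the limiting problem is an exact-clone problem whose minimizer, by the previous paragraph, is the merged solution independent of the split. As every subsequence yields the same limit, the full sequence of minimizers converges, giving $\hat{r}'(x),\hat{r}'(x') \to \hat{r}(x)$ and $\hat{r}'(y) \to \hat{r}(y)$. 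Therefore, for all sufficiently small $\epsilon$, both $|\hat{r}'(x) - \hat{r}'(x')| \le \delta$ and $|\hat{r}(y) - \hat{r}'(y)| \le \delta$ hold, which is exactly Definition \ref{def:robust_to_approx_clones}. I expect the main obstacle to be the limit weight analysis: one must ensure $\beta_x,\beta_{x'}$ remain bounded away from $0$ so that strict convexity, uniqueness, and argmin-continuity survive in the limit. This requires a mild nondegeneracy condition that $\mathcal{S}$ is full-dimensional with positive density near $x$, so that any hyperplane through $x$ splits its projection region into two pieces of positive volume.
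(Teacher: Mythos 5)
Your core argument is the same as the paper's. The paper's proof (Appendix~\ref{app:weighted_mle_satisfies_ind_of_clones}) also interpolates through exactly your two intermediate problems: an objective $f_1$ on $\mathcal{M}$ with merged weights $w_1(x) = w_{\mathcal{M}'}(x) + w_{\mathcal{M}'}(x')$, and an objective $f_2$ on $\mathcal{M}'$ with copied data $p_2(x' \succ y) = p_{\mathcal{D}}(x \succ y)$; and its Lemma~\ref{lemma:r1_to_r2} is precisely your exact-clone collapse, proved exactly as you propose, via the weight-divided stationarity system of Theorem~\ref{thm:solution_of_weighted_mle}, which is indeed independent of the coordinate's own weight. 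Where you differ is the transfer step: the paper is quantitative, setting $\epsilon = c\delta^2$ and chaining $\rw^{\mathcal{D}} \approx r_1 = r_2 \approx \rw^{\mathcal{D}'}$ with explicit $O(\sqrt{\epsilon})$ bounds from strong convexity (Lemmas~\ref{lemma:strong_convexity}, \ref{lemma:r0_to_r1}, \ref{lemma:r2_to_r3}) and the a priori bound of Lemma~\ref{lemma:properties_of_rw}, whereas you use a soft compactness/subsequence argument. Soft is fine for Definition~\ref{def:robust_to_approx_clones}, which only demands existence of $\epsilon(\delta)$; a minor side note is that your hedge ``for generic finite $\mathcal{M}$'' in the weight-continuity step is unnecessary, since bisectors of distinct points are hyperplanes and hence always have measure zero.

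The genuine gap is the one you flagged at the end, and your proposed fix does not repair it: you require the limiting split $\beta_x, \beta_{x'}$ to be bounded away from $0$, patched by assuming every hyperplane through $x$ splits its Voronoi cell into two positive-volume pieces. That hypothesis is not in the theorem and actually fails in the paper's own setting: with $\mathcal{S} = [0,1]^d$ (the case study's choice) and $x$ on a face, taking $x'$ to approach $x$ along the inward normal gives $w_{\mathcal{M}'}(x) = \Theta(\epsilon) \to 0$, so $\beta_x = 0$. Then the limiting objective has no dependence on $r(x)$, strict convexity and coercivity in that coordinate degenerate, your argmin-continuity step breaks, and even the a priori bound $|\hat{r}(y)| \le \sqrt{(2+\lambda)/(\lambda w(y))}$ blows up, so you cannot extract convergent subsequences of $r_k(x)$ by that route. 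Adding the nondegeneracy assumption therefore proves a strictly weaker theorem. The repair is already contained in your own key observation: since the divided stationarity equation
\begin{equation*}
    \mathrm{wAWR}(z) = \lambda r(z) + \sum_{y} w(y)\,\sigma\bigl(r(z) - r(y)\bigr)
\end{equation*}
does not involve $w(z)$, and its right-hand side is strictly increasing in $r(z)$ with both $\mathrm{wAWR}(z)$ and the sigmoid sum lying in $[0,1]$, every coordinate satisfies the uniform bound $|r(z)| \le 1/\lambda$ and is pinned down continuously by the remaining coordinates even when its weight vanishes. Routing the degenerate coordinates through this monotone-equation argument (and reserving strong convexity for the coordinates whose weights stay bounded below, essentially as the paper does) closes the gap without any extra assumption on $\mathcal{S}$.
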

    
    While we defer the formal proof of Theorem \ref{thm:weighted_mle_satisfies_ind_of_clones} to Appendix \ref{app:weighted_mle_satisfies_ind_of_clones}, we will next provide some intuition for why this result holds. Consider the Voronoi diagram of the alternatives in $\mathcal{M}$, which consists of a partitioning of $\mathcal{S}$ into regions, where each region corresponds to all of the points in $\mathcal{S}$ that are closest to some alternative in $\mathcal{M}$. For example, if $\mathcal{S} = [0,1] \times [0,1]$ and $\mathcal{M} = \{(0,0), (1,0), (1,1)\}$, then we can draw the Voronoi diagram in Figure \ref{fig:first_voronoi}.
\begin{figure}[ht]
    \centering
\begin{tikzpicture}
    \begin{axis}[
        width=5cm,
        height=5cm,
        axis equal,
        xmin=0, xmax=1,
        ymin=0, ymax=1
    ]

    \addplot[fill=blue!30, draw=black, thick] coordinates {
        (0,0) (0.5,0) (0.5,0.5) (0,1) (0,0)
    }; 

    \addplot[fill=green!30, draw=black, thick] coordinates {
        (0.5,0) (1,0) (1,0.5) (1.5,0.5) (0.5,0.5) (0.5,0)
    }; 

    \addplot[fill=red!30, draw=black, thick] coordinates {
        (0.5,0.5) (1,0.5) (1,1) (0,1)  (0,1) (0.5,0.5)
    };

    \addplot[mark=*, only marks, mark size=2pt, red] coordinates {
        (0,0) (1,0) (1,1) 
    };

    \end{axis}
\end{tikzpicture}
    \caption{Voronoi diagram for $\mathcal{M} = \{(0,0), (1,0), (1,1)\}$.}
\label{fig:first_voronoi}
\end{figure}
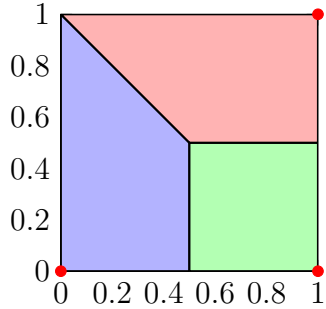

    The weight $w_{\mathcal{M}}(y)$ exactly corresponds to the area of the region in the Voronoi diagram that corresponds to $y$. So for the alternatives in Figure \ref{fig:first_voronoi}, we have that $w_{\mathcal{M}}((0,0)) = 0.375$, $w_{\mathcal{M}}((1,0)) = 0.25$, and $w_{\mathcal{M}}((1,1)) = 0.375$. 

    Now suppose $\mathcal{M}' = \mathcal{M} \cup \{(0.9,1)\}$, i.e. $\mathcal{M}'$ contains a clone of the alternative $(1,1)$. The Voronoi diagram of $\mathcal{M}'$ is shown in Figure \ref{fig:second_voronoi}.
    
    \begin{figure}[ht]
    \centering
\begin{tikzpicture}
    \begin{axis}[
        width=5cm,
        height=5cm,
        axis equal,
        xmin=0, xmax=1,
        ymin=0, ymax=1
    ]

    \addplot[fill=blue!30, draw=black, thick] coordinates {
        (0,0) (0.5,0) (0.5,0.455) (0,0.905) (0,0)
    }; 

    \addplot[fill=green!30, draw=black, thick] coordinates {
        (0.5,0) (1,0) (1,0.5) (1.5,0.5) (0.5,0.5) (0.5,0)
    }; 

    \addplot[fill=red!30, draw=black, thick] coordinates {
        (0.5,0.5) (1,0.5) (1,1) (0,1)  (0,1) (0.5,0.5)
    };
    \addplot[fill=orange!30, draw=black, thick] coordinates {
        (0.5,0.455) (0.95,0.5) (0.95,1) (0,1)  (0,0.905) (0.5,0.455)
    };

    \addplot[mark=*, only marks, mark size=2pt, red] coordinates {
        (0,0) (1,0) (1,1) (0.9,1)
    };

    \end{axis}
\end{tikzpicture}
    \caption{Diagram for $\mathcal{M} = \{(0,0), (1,0), (1,1), (0.9,1)\}$.}
\label{fig:second_voronoi}
\end{figure}
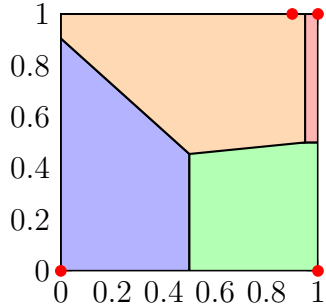

    We now have that $w_{\mathcal{M}'}((0,0)) = 0.34$, $w_{\mathcal{M}'}((1,0)) = 0.239875$, and $w_{\mathcal{M}'}((1,1)) = 0.025$, $w_{\mathcal{M}'}((0.9,1)) = 0.395125$. Therefore, the introduction of the approximate clone $x' = (0.9,1)$ caused the weight of $x = (1,1)$ to be split between $x$ and $x'$, and the weights of the other alternatives only changed by a small amount. Furthermore, because annotator preferences are continuous, we also have that $p_{\mathcal{D}}(x',y) \approx p_{\mathcal{D}}(x,y)$ for any alternative $y$. Therefore, the introduction of the clone does not significantly change the weighted MLE objective function. In the proof of Theorem \ref{thm:weighted_mle_satisfies_ind_of_clones}, we formally show this, and conclude that the weighted MLE reward function also does not significantly change.

    The weighted MLE  is not only robust to approximate clones, but also preserves many of the same interpretations as the standard MLE. For example, the weighted MLE and the standard MLE are equivalent whenever $\mathcal{M}$ is uniformly distributed across $\mathcal{S}$, i.e. when $w_{\mathcal{M}}(y) = \frac{1}{\mathcal{M}}$ for all $y \in \mathcal{M}$. In this case, Equation \eqref{eq:weighted_regularized_MLE} is a scaled version of Equation \eqref{eq:regularized_MLE}, and therefore $\hat{r}^{\mathcal{D}} = \rw^{\mathcal{D}}$. Therefore, the weighted MLE only differs from the standard MLE when the alternatives in $\mathcal{M}$ are not evenly distributed across $\mathcal{S}$.

    To further understand this new algorithm, we discuss two additional perspectives on the weighted MLE  and its relationship to the standard MLE.

\subsection*{Relationship to Weighted Average Win Rate}
    In Theorem \ref{mle_is_mestimator} we established a strong connection between the MLE and the empirical average win-rate. Similarly, the weighted MLE has a close relationship with the \emph{weighted average win rate} defined in Definition \ref{def:weighted_average_win_rate}. Specifically, Theorem \ref{thm:solution_of_weighted_mle} shows that the weighted MLE is also an M-estimator solving a system of equations relating the weighted average win rate of $r$ to the empirical weighted average win rate in the data set $\mathcal{D}$.

    \begin{definition}\label{def:weighted_average_win_rate}
        For any dataset $\mathcal{D}$ over alternatives $\mathcal{M}$, the weighted average win rate of alternative $x \in \mathcal{M}$ is  $\wAWR(x) = \sum_{y \in \mathcal{M}} w_{\mathcal{M}}(y) p_{\mathcal{D}}(x \succ y)$.
    \end{definition}

   \begin{theorem}\label{thm:solution_of_weighted_mle}
    For any dataset $\mathcal{D}$ over alternatives $\mathcal{M}$, the reward function $\rw^{\mathcal{D}}$ from Equation \eqref{eq:weighted_regularized_MLE} satisfies the system of equations
    \[
        \mathrm{wAWR}_{\mathcal{D}}(x) = \lambda \rw^{\mathcal{D}}(x) + \widehat{\mathrm{wAWR}}(x) \quad \forall x \in \mathcal{M},
    \]
    where $\widehat{\mathrm{wAWR}}(x) = \sum_{y \in \mathcal{M}} w_{\mathcal{M}}(y)\frac{e^{\rw^{\mathcal{D}}(x)}}{e^{\rw^{\mathcal{D}}(x)} + e^{\rw^{\mathcal{D}}(y)}}$.
    \end{theorem}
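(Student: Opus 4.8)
The plan is to characterize $\rw^{\mathcal{D}}$ through the first-order optimality conditions of the objective $f_{\mathcal{D}}$, mirroring the argument behind Theorem~\ref{mle_is_mestimator} but carrying the weights through. First I would note that $f_{\mathcal{D}}$ is strictly convex: each negative log-likelihood term is convex in $r$ (being $-\log$ of a logistic link composed with an affine map), and the weighted regularizer $\frac{\lambda}{2}\sum_x w_{\mathcal{M}}(x) r(x)^2$ is strictly convex provided $w_{\mathcal{M}}(x) > 0$ for every $x \in \mathcal{M}$. Consequently $\rw^{\mathcal{D}}$ is the unique global minimizer and is completely characterized by the stationarity condition $\partial f_{\mathcal{D}}/\partial r(x) = 0$ for all $x$.

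The core computation is this partial derivative at a fixed $x \in \mathcal{M}$. Writing $\sigma_{xy} = e^{r(x)}/(e^{r(x)} + e^{r(y)})$, I would use the identities $\partial_{r(x)} \log \sigma_{xy} = 1 - \sigma_{xy} = \sigma_{yx}$ and $\partial_{r(x)} \log \sigma_{zx} = -\sigma_{xz}$, so that among the ordered pairs only those with $x$ in the first or second slot contribute. Collecting the two resulting groups of terms gives
\[
\frac{\partial f_{\mathcal{D}}}{\partial r(x)} = w_{\mathcal{M}}(x)\Big( -\sum_{y \in \mathcal{M}} w_{\mathcal{M}}(y) p_{\mathcal{D}}(x \succ y)\,\sigma_{yx} + \sum_{y \in \mathcal{M}} w_{\mathcal{M}}(y) p_{\mathcal{D}}(y \succ x)\,\sigma_{xy} \Big) + \lambda w_{\mathcal{M}}(x) r(x).
\]

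The key simplification is to substitute the complementarity identities $\sigma_{yx} = 1 - \sigma_{xy}$ and $p_{\mathcal{D}}(y \succ x) = 1 - p_{\mathcal{D}}(x \succ y)$ into the bracketed expression. After expanding, the mixed products $w_{\mathcal{M}}(y)\,p_{\mathcal{D}}(x \succ y)\,\sigma_{xy}$ cancel, leaving exactly $\sum_y w_{\mathcal{M}}(y)\sigma_{xy} - \sum_y w_{\mathcal{M}}(y) p_{\mathcal{D}}(x \succ y) = \widehat{\mathrm{wAWR}}(x) - \wAWR(x)$. Hence $\partial f_{\mathcal{D}}/\partial r(x) = w_{\mathcal{M}}(x)\big(\widehat{\mathrm{wAWR}}(x) - \wAWR(x) + \lambda \rw^{\mathcal{D}}(x)\big)$, and setting this to zero and dividing by $w_{\mathcal{M}}(x)$ yields the claimed system $\wAWR(x) = \lambda \rw^{\mathcal{D}}(x) + \widehat{\mathrm{wAWR}}(x)$.

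The only genuine obstacle is justifying the division by $w_{\mathcal{M}}(x)$, i.e.\ that each weight is strictly positive. I would argue this from the Voronoi interpretation given after Theorem~\ref{thm:weighted_mle_satisfies_ind_of_clones}: since $\mathcal{S}$ has positive volume and the alternatives of $\mathcal{M}$ are distinct points, each alternative is the unique nearest neighbor of a positive-volume neighborhood of itself, so $w_{\mathcal{M}}(x) > 0$. This same fact secures the strict convexity invoked in the first step, so both loose ends are dispatched together; the remaining work is routine bookkeeping of ordered-pair sums, identical in spirit to the unweighted proof of Theorem~\ref{mle_is_mestimator}.
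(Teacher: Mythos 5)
Your proposal is correct and follows essentially the same route as the paper's proof: characterize $\rw^{\mathcal{D}}$ via the stationarity condition of the (strongly) convex objective, compute the partial derivative, use the identities $p_{\mathcal{D}}(y \succ x) = 1 - p_{\mathcal{D}}(x \succ y)$ and $\sigma_{yx} = 1 - \sigma_{xy}$ to collapse the bracket to $\widehat{\mathrm{wAWR}}(x) - \wAWR(x)$, then divide by $w_{\mathcal{M}}(x)$. Your explicit Voronoi argument for $w_{\mathcal{M}}(x) > 0$ is a small addition the paper leaves implicit (its convexity lemma simply assumes positive weights), but it does not change the structure of the argument.
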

    The proof of Theorem \ref{thm:solution_of_weighted_mle} can be found in Appendix \ref{app:proof_of_thm_solution_of_weighted_mle}. Similar to the regularized MLE, a major consequence is that the order in which the alternatives are ranked in the weighted MLE is the same as the order in which the alternatives are ranked by weighted average win rate.

    \begin{corollary}\label{cor:ranking_order_of_weighted_mle}
                For any dataset $\mathcal{D}$ over alternatives $\mathcal{M}$ and any $x,y \in \mathcal{M}$, $\rw^{\mathcal{D}}(x) \ge \rw^{\mathcal{D}}(y)$ if and only if $\wAWR(x) \ge \wAWR(y)$.  
    \end{corollary}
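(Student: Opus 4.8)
The plan is to derive the corollary directly from the fixed-point characterization in Theorem \ref{thm:solution_of_weighted_mle}, with essentially no additional machinery. Write $r := \rw^{\mathcal{D}}$ for brevity. Theorem \ref{thm:solution_of_weighted_mle} gives, for every $x \in \mathcal{M}$,
\[
\wAWR(x) = \lambda\, r(x) + \sum_{z \in \mathcal{M}} w_{\mathcal{M}}(z)\,\frac{e^{r(x)}}{e^{r(x)} + e^{r(z)}}.
\]
The first observation is that $\frac{e^{r(x)}}{e^{r(x)} + e^{r(z)}} = \sigma\big(r(x) - r(z)\big)$, where $\sigma(t) = 1/(1 + e^{-t})$ denotes the logistic sigmoid. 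Crucially, the weights $w_{\mathcal{M}}(z)$ and the reward values $r(z)$ appearing on the right-hand side do not depend on $x$; only the outer argument $r(x)$ does.

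This lets me package the entire right-hand side as a single univariate function
\[
\Phi(t) := \lambda\, t + \sum_{z \in \mathcal{M}} w_{\mathcal{M}}(z)\,\sigma\big(t - r(z)\big),
\]
so that $\wAWR(x) = \Phi(r(x))$ holds simultaneously for all $x \in \mathcal{M}$. The corollary then reduces to the single claim that $\Phi$ is strictly increasing: if so, then $\wAWR(x) = \Phi(r(x)) \ge \Phi(r(y)) = \wAWR(y)$ if and only if $r(x) \ge r(y)$, which is exactly the stated equivalence in both directions.

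To establish strict monotonicity of $\Phi$, I would differentiate: $\Phi'(t) = \lambda + \sum_{z \in \mathcal{M}} w_{\mathcal{M}}(z)\,\sigma'\big(t - r(z)\big)$. Since $\sigma'(t) = \sigma(t)\big(1 - \sigma(t)\big) > 0$ everywhere, and since the weights are nonnegative (being normalized Voronoi volumes, per Definition \ref{def:weights_and_proj}) with $\lambda > 0$, every summand is nonnegative and the leading term is strictly positive, giving $\Phi'(t) \ge \lambda > 0$. Hence $\Phi$ is strictly increasing on $\mathbb{R}$, completing the argument.

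There is no serious obstacle here; the entire content lies in recognizing that $\wAWR(x)$ and $\wAWR(y)$ are evaluations of one common function $\Phi$ at the points $r(x)$ and $r(y)$. The only point requiring a touch of care is the dependence structure, namely verifying that $\Phi$ genuinely does not depend on the alternative being evaluated, which hinges on the full collection $\{r(z)\}_{z \in \mathcal{M}}$ and $\{w_{\mathcal{M}}(z)\}_{z \in \mathcal{M}}$ appearing symmetrically inside the sum. If one prefers to avoid invoking differentiability, the same conclusion follows by bounding $\Phi(s) - \Phi(t)$ for $s > t$ directly, using monotonicity of $\sigma$ for the sum together with the strictly positive contribution $\lambda(s - t)$ from the regularization term.
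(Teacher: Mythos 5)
Your proof is correct and matches the argument the paper intends: the corollary is stated without an explicit proof as an immediate consequence of Theorem \ref{thm:solution_of_weighted_mle}, and your observation that $\wAWR(x) = \Phi(r(x))$ for the single strictly increasing function $\Phi(t) = \lambda t + \sum_{z \in \mathcal{M}} w_{\mathcal{M}}(z)\,\sigma(t - r(z))$ (strictly increasing because $\Phi'(t) \ge \lambda > 0$) is precisely the omitted step. Your care in checking that $\Phi$ does not depend on which alternative is evaluated is the right point to verify, and the argument is complete.
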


\subsection*{Interpretation as an MLE Approximation}
    One interpretation of the weighted MLE is that the function $f_{\mathcal{D}}(r)$ approximates what the regularized MLE objective would be if the dataset $\mathcal{M}$ contained every alternative in the entire alternative space $\mathcal{S}$. Approximating the regularized MLE objective for the entire alternative space $\mathcal{S}$ is a natural goal when the algorithm only is given information about a subset of alternatives $\mathcal{M} \subseteq \mathcal{S}$. The MLE objective for the entire alternative space $\mathcal{S}$ can be written as 
    \begin{equation}\label{eq:all_S_mle_obj}
        f_{\mathcal{D}}^{\mathcal{S}}(r)   =  - \frac{1}{|\mathcal{S}|}\int_{y_1,y_2 \in \mc{S}} \mathcal{L}(y_1, y_2)dy_1dy_2 + \frac{\lambda}{2}\int_{y \in \mathcal{S}} r(y)^2dy
    \end{equation}   
    where the log likelihood term for the comparisons between $y_1$ and $y_2$ is defined as 
    \[
    \mathcal{L}(y_1,y_2) = p_\mathcal{D}(y_1 \succ y_2)\log \left( \frac{e^{r(y_1)}}{e^{r(y_1)} + e^{r(y_2)}}\right).
    \]
   Theorem \ref{thm:weighted_mle_as_S_mle} (proven in Appendix \ref{app:weighted_mle_as_S_mle}) shows how the weighted MLE objective can be written with the same structure as Equation \eqref{eq:all_S_mle_obj} using using the $\mathrm{proj}_{\mathcal{M}}$ function to approximate the unknown quantities. 
    \begin{theorem}\label{thm:weighted_mle_as_S_mle}
          The weighted MLE objective function can be written in terms of $\mathcal{S}$ and the projection function $\mathrm{proj}_{\mathcal{M}}$ as:
         \begin{align*}
            f_{\mathcal{D}}(r)   =  \frac{-1}{|\mathcal{S}|^2}\int\limits_{y_1,y_2 \in \mathcal{S}} \overline{\mathcal{L}}(y_1, y_2)dy_1dy_2 + \frac{\lambda}{2|\mathcal{S}|}\int\limits_{y \in \mathcal{S}} \overline{r^2}(y)dy.
        \end{align*}
        where we define the estimated log likelihood of two alternatives $y_1,y_2 \in \mathcal{S}$ as
        {\small
        \begin{align*}
            \overline{\mathcal{L}}(y_1, y_2) &:= \frac{1}{|\mathrm{proj}_\mathcal{M}(y_1)| \cdot |\mathrm{proj}_\mathcal{M}(y_2)|} \\
            &\quad  \times \sum_{\substack{{x_1 \in \mathrm{proj}_\mathcal{M}(y_1)} \\ {x_2 \in \mathrm{proj}_\mathcal{M}(y_2)}}} p_\mathcal{D}(x_1 \succ x_2)\log \left( \frac{e^{r(x_1)}}{e^{r(x_1)} + e^{r(x_2)}}\right)
        \end{align*}}
        and we define the estimated reward squared of an alternative $y \in \mathcal{S}$ as
        \[
          \overline{r^2}(y) := \frac{1}{|\mathrm{proj}_\mathcal{M}(y)| }\sum_{x \in \mathrm{proj}_\mathcal{M}(y)}r(x)^2.
        \]
    \end{theorem}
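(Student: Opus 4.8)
The plan is to prove the identity by a direct computation, starting from the claimed right-hand side and reducing it back to the definition of $f_{\mathcal{D}}(r)$ in Equation \eqref{eq:weighted_regularized_MLE}. The key observation is that each average over $\mathrm{proj}_{\mathcal{M}}(y)$ can be written as a sum over all of $\mathcal{M}$ weighted by the indicator factor $1_{x \in \mathrm{proj}_{\mathcal{M}}(x)}/|\mathrm{proj}_{\mathcal{M}}(x)|$ appearing in Definition \ref{def:weights_and_proj}, so that integrating this factor against $\tfrac{1}{|\mathcal{S}|}dx$ over $\mathcal{S}$ reproduces exactly the weight $w_{\mathcal{M}}$. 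The whole argument is therefore a matter of expanding the definitions of $\overline{\mathcal{L}}$ and $\overline{r^2}$, swapping the (finite) sums over $\mathcal{M}$ with the integrals over $\mathcal{S}$, and recognizing the resulting integrals as weights.

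First I would treat the likelihood term. Writing $\overline{\mathcal{L}}(y_1,y_2)$ with indicators, I would replace the two constrained sums over $\mathrm{proj}_{\mathcal{M}}(y_1)$ and $\mathrm{proj}_{\mathcal{M}}(y_2)$ by unconstrained sums over $x_1,x_2\in\mathcal{M}$, each carrying a factor $1_{x_j\in\mathrm{proj}_{\mathcal{M}}(y_j)}/|\mathrm{proj}_{\mathcal{M}}(y_j)|$. By Fubini's theorem, justified because the log-likelihood integrand is bounded and $|\mathcal{S}|$ is finite, I would then interchange the finite double sum over $(x_1,x_2)$ with the double integral over $\mathcal{S}^2$. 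The crucial point is that the remaining double integral \emph{factorizes}: its integrand is a product of one function of $y_1$ with one function of $y_2$, so $\int_{\mathcal{S}^2}\cdots\,dy_1\,dy_2$ splits into $\bigl(\int_{\mathcal{S}}\cdots\,dy_1\bigr)\bigl(\int_{\mathcal{S}}\cdots\,dy_2\bigr)$. Each single integral equals $|\mathcal{S}|\,w_{\mathcal{M}}(x_j)$ by Definition \ref{def:weights_and_proj}, so after dividing by $|\mathcal{S}|^2$ the likelihood term becomes $\sum_{x_1,x_2\in\mathcal{M}} w_{\mathcal{M}}(x_1)w_{\mathcal{M}}(x_2)\,p_{\mathcal{D}}(x_1\succ x_2)\log\bigl(e^{r(x_1)}/(e^{r(x_1)}+e^{r(x_2)})\bigr)$, which is precisely the first term of Equation \eqref{eq:weighted_regularized_MLE}.

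The regularization term is handled the same way but is simpler, since it involves only a single integral: expanding $\overline{r^2}(y)$ with an indicator over $\mathcal{M}$ and swapping the sum with the integral yields $\frac{\lambda}{2}\sum_{x\in\mathcal{M}} r(x)^2\cdot\tfrac{1}{|\mathcal{S}|}\int_{\mathcal{S}} 1_{x\in\mathrm{proj}_{\mathcal{M}}(y)}/|\mathrm{proj}_{\mathcal{M}}(y)|\,dy=\frac{\lambda}{2}\sum_{x\in\mathcal{M}} w_{\mathcal{M}}(x)r(x)^2$, matching the second term of Equation \eqref{eq:weighted_regularized_MLE}. Adding the two terms completes the proof. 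The computation is essentially routine; the only step requiring genuine care is the factorization of the double integral in the likelihood term, which is what forces the product form $w_{\mathcal{M}}(x_1)w_{\mathcal{M}}(x_2)$ rather than a single joint weight, and the standard (but worth stating) justification of the sum-integral interchange via boundedness of the integrand and finiteness of $|\mathcal{S}|$.
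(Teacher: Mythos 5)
Your proof is correct and takes essentially the same approach as the paper: both expand the weights $w_{\mathcal{M}}$ via the indicator representation in Definition \ref{def:weights_and_proj}, interchange the finite sums over $\mathcal{M}$ with the integrals over $\mathcal{S}$, and use the factorization of the double integral into the product $w_{\mathcal{M}}(x_1)w_{\mathcal{M}}(x_2)$. The only difference is that you run the chain of equalities from the integral form back to Equation \eqref{eq:weighted_regularized_MLE}, while the paper computes in the opposite direction, which is immaterial.
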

 For further intuition about Theorem \ref{thm:weighted_mle_as_S_mle}, note that when $y_1$ and $y_2$ both have a unique closest alternative in $\mathcal{M}$ (i.e. $|\mathrm{proj}_{\mathcal{M}}(y_1)| = |\mathrm{proj}_{\mathcal{M}}(y_2)| = 1$), then  
\[
    \overline{\mathcal{L}}(y_1,y_2) = p_\mathcal{D}(y_1' \succ y_2')\log \left( \frac{e^{r(y_1')}}{e^{r(y_1')} + e^{r(y_2')}}\right)
\]
 where $y_1'$ and $y_2'$ to be the unique elements of $\mathrm{proj}_{\mathcal{M}}(y_1)$ and $\mathrm{proj}_{\mathcal{M}}(y_2)$ respectively. In other words, in this case $\overline{\mathcal{L}}(y_1,y_2)$ is simply approximating $\mathcal{L}(y_1,y_2)$ using the closest alternatives in $\mathcal{M}$. Therefore, Theorem \ref{thm:weighted_mle_as_S_mle} shows that the weighted MLE also has a natural interpretation as an approximate MLE solution that only depends on $\mathcal{M}$ through the projection function $\mathrm{proj}_{\mathcal{M}}$.

\section{Case Study}
Although our contributions are primarily theoretical, we supplement our results with a synthetic case study that highlights an instance where the weighted MLE performs better than the standard regularized MLE under diverse preferences. This case study moves our theory closer to practice in a few different ways. First, LLMs typically generate the responses seen by human annotators, and so our case study considers textual responses generated by the gpt-4o-mini model. Note that this means each alternative has only one data point, unlike our theoretical results where we assume sufficiently many comparisons for every pair of alternatives. This better represents what happens in practice when each pair of responses is newly generated by an LLM at the point when an annotator is asked to report a preference \cite{bai2022training}. LLMs have also been shown to be effective as implicit computational models of humans \citep{horton2023large}, and we therefore use an LLM as a stand-in for human annotators with diverse preferences rather than assuming humans make decisions using a BTL model.

In the case study, our goal is to train a reward function which evaluates answers to a single question: \emph{‘Describe Paris’}. We use OpenAI’s gpt-4o-mini model both to generate textual descriptions of Paris and to simulate human annotators with diverse preferences. We consider a population with three types of annotators, each of which attach a different amount of importance to seeing the topics of food, art, and romance mentioned in a description of Paris. We then construct two preference datasets for this population, one which includes additional approximate clones (`Clones') and one which does not (`Original'). More details on the annotator population and the dataset generation process can be found in Appendix \ref{app:case_study}.

We approximate both the standard MLE algorithm and the weighted MLE algorithm using neural networks. Each neural network takes as input a context vector and outputs a reward value. To generate the context vectors, we use OpenAI’s text-embedding-3-small model to extract embedding vectors from the textual descriptions of Paris. More details on the neural network training process can be found in Appendix G. We run each algorithm on both datasets described in the previous paragraph and observe how the reward function output by the algorithm changes across datasets. To visualize the change in reward function, we evaluate each reward function on all of the alternatives, and then plot the mean reward for three types of answers (food, art, and romance), with error bars corresponding to the sample standard deviations.

Figure \ref{fig:MLE} shows how the standard MLE algorithm performs on these two datasets. Training on dataset `Original' leads to romance being the topic with the highest reward. However, training on dataset `Clones' leads to art being the topic with the highest reward. The fact that the topic with the highest reward changes with the addition of clones highlights the lack of robustness of the MLE. Furthermore, we note that both datasets contain a relatively large amount of data, and therefore this noticeable change in the reward function cannot be attributed only to variance in the data generation or training processes.

\begin{figure}[ht] 
    \centering
    \includegraphics[width=0.45\textwidth]{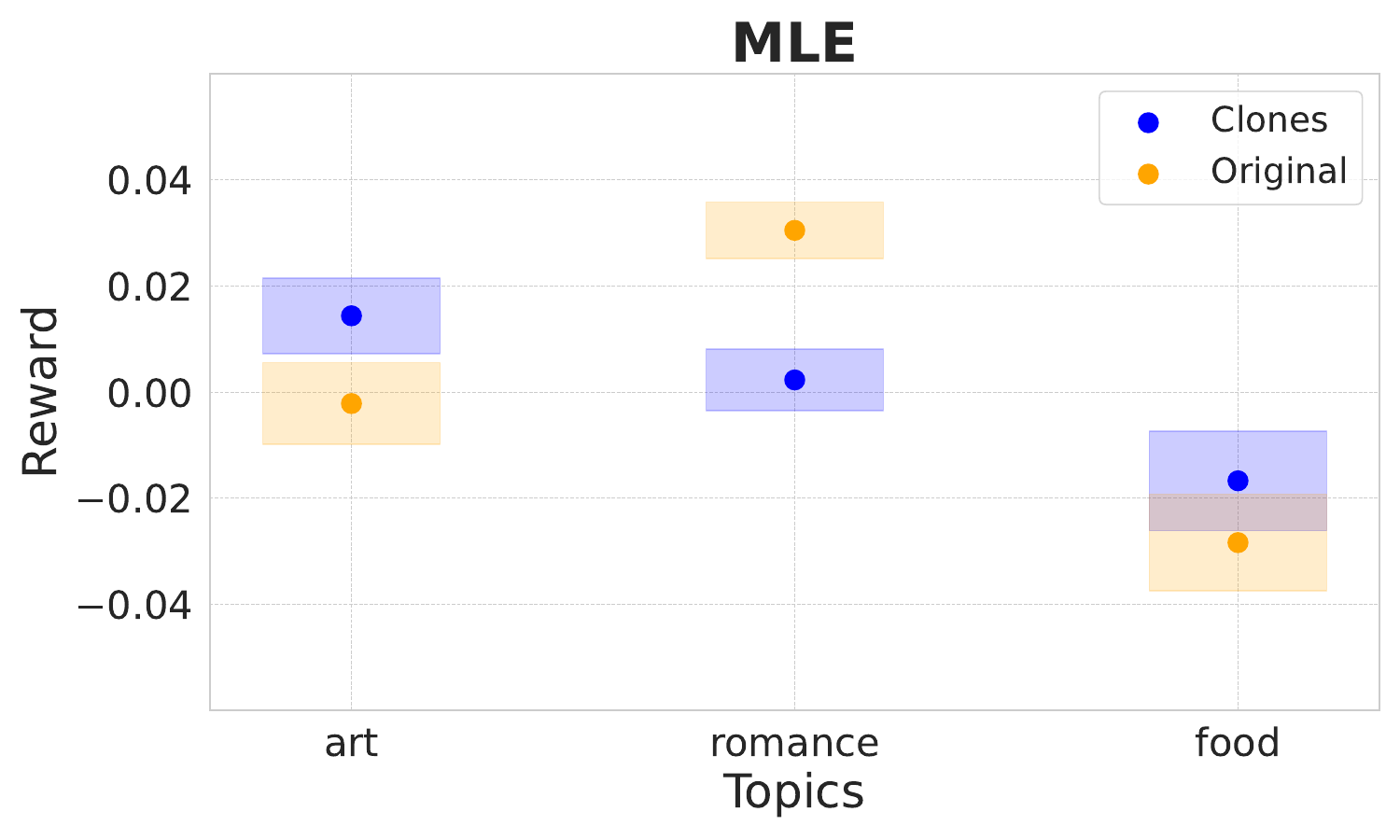} 
    \caption{Results for the MLE: The yellow points show the average value of the MLE reward function for different topics when trained on dataset `Original'. The blue points show the same but when trained on dataset `Clones'. In the presence of clones, the rewards for both art and romance change significantly, showing that the MLE is not robust to clones.}
    \label{fig:MLE}
\end{figure}

Figure \ref{fig:weighted_MLE} shows the same results for the weighted MLE algorithm. Recall that the weighted MLE algorithm requires a choice of $\mc{S}$, which is the set of all possible alternatives. In the experiment shown in Figure \ref{fig:weighted_MLE}, we chose $\mc{S}$ to be the unit cube in the high-dimensional context space. We also show that similar results hold for other choices of $\mc{S}$ in Appendix \ref{app:case_study}. As shown in Figure \ref{fig:weighted_MLE}, the average value for the weighted MLE of each of the different categories is roughly the same for both the dataset without clones and the dataset with clones. This shows that the weighted MLE is robust to the presence of clones, which aligns with the theoretical results of Section \ref{sec:weighted_mle_estimator}.

\begin{figure}[ht] 
    \centering
    \includegraphics[width=0.45\textwidth]{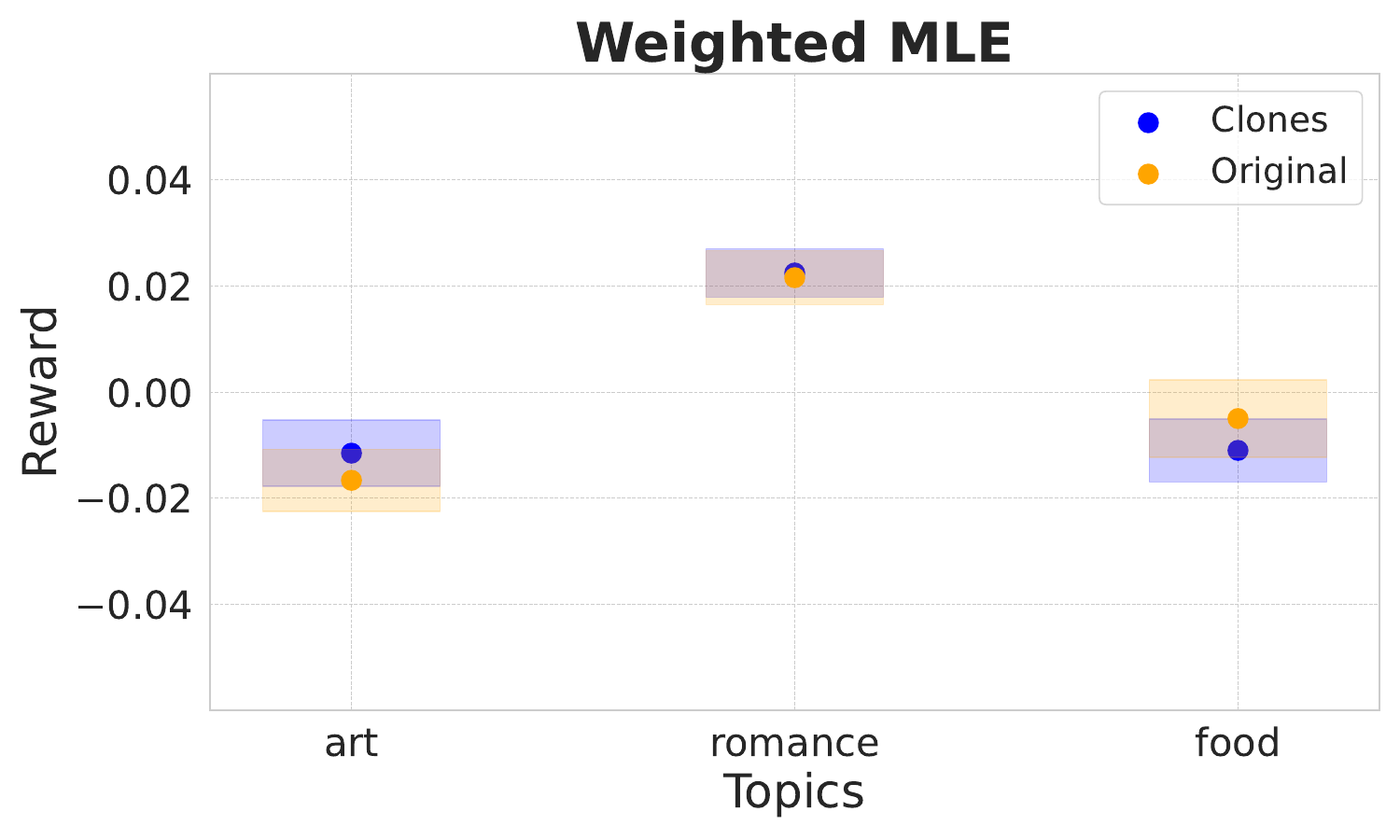} 
    \caption{Results for the weighted MLE: The yellow points show the average value of the weighted MLE reward function for different topics when trained on dataset `Original'. The blue points show the same but when trained on dataset `Clones'. The rewards for the three topics do not change significantly, demonstrating the robustness of the weighted MLE.}
    \label{fig:weighted_MLE}
\end{figure}

\section{Discussion}

In this section we discuss some limitations of our findings and outline potential directions for future research.

There is ample opportunity for further empirical research. Our case study is meant to highlight a specific instance where clones cause a problem for the standard RLHF methods, but does not imply any conclusions regarding how frequent or pervasive clones may be in practice. For example, we may not expect LLM-generated answers to the vanilla prompt ‘describe Paris’ to have such stark deviation into categories; rather, answers may be more balanced. The realized impact of approximate clones will also of course depend on the preferences of the annotator population. Further research could evaluate how often clones appear in practice and characterize the types of annotator populations which cause clones to be a problem.

In our theoretical results, we assume that we have sufficient comparisons between each pair of alternatives. This assumption may be unrealistic if the alternatives are answers generated by LLMs as in our case study, as then each response would only be involved in one comparison. The case study suggests a potential solution to this problem, which is to first cluster the original alternatives based on common features (or context) to form meta-alternatives. There could still exist approximate clones among these meta-alternatives; however, each meta-alternative would likely have a larger number of comparisons. One potential question for future work is to explore how different clustering schemes affect the robustness of both the MLE algorithm and the weighted MLE algorithm. 

Even if each alternative is involved in multiple comparisons, it could be interesting to relax the assumption that we have sufficient comparisons between each pair of alternatives. As mentioned in Section~\ref{sec:intro}, one choice for the alternatives in RLHF is the question/answer pairs. In this case, we would expect the dataset to only include comparisons between alternatives (question/answer pairs) where the question is the same. Although this would not exactly match the assumptions in our theory, we expect that similar theoretical results would hold with regard to robustness to approximate clones.

Finally, we used a simple weighting scheme in Definition \ref{def:weights_and_proj}  to balance the objective function when the observed alternatives are not evenly distributed over the entire alternative space. However, this is not the unique weighting scheme that can achieve this desired result. One direction for future work is to experiment with different weighting schemes to see which perform the best in practice.

\section*{Acknowledgments}

Procaccia gratefully acknowledges research support by the National Science Foundation under grants IIS-2147187, IIS-2229881 and CCF-2007080; and by the Office of Naval Research under grant N00014-20-1-2488. Schiffer was supported by an NSF Graduate Research Fellowship. Zhang was supported by an NSF Graduate Research Fellowship.
\newpage
\bibliographystyle{abbrvnat}
\bibliography{bib}

\newpage
\appendix

\section{Additional Related Work Details}\label{app:related_work}

In this section, we provide a more comprehensive comparison to several works that are especially related to ours. Like us, \citet{xu2023rlhf} are concerned about the performance of current RLHF algorithms in the presence of duplicates. Like us, they show that there are simple models for human preferences under which the standard RLHF algorithms perform badly. Unlike us, their results are for a specific class of models they call dichotomy models. In such models, there are two types of messages and two types of individuals, and each type of individual has reward $1$ for one type of message and reward $0$ for the other. Their main results also focus on three-way comparisons, while our results deal with pairwise comparisons (which are standard in current RLHF algorithms). 

We were inspired by \citet{siththaranjan2023distributional}, who show that standard RLHF implicitly aggregates over hidden context according to Borda count. We build off their work to show that standard RLHF algorithms are not robust to approximate clones. Like \citet{siththaranjan2023distributional}, we assume that humans have diverse preferences, but unlike them, we do not summarize these preferences by a hidden context. Rather, we directly model populations with different reward functions. Note that when we refer to ‘context’ in our paper, we are referring to the underlying context of alternatives, not of annotators as in \citet{siththaranjan2023distributional}. Like us, \citet{siththaranjan2023distributional} give an impossibility result for RLHF algorithms in the presence of diverse preferences. They show that every RLHF algorithm fails to exactly recover the mean reward function for some population, while our result shows that every RLHF algorithm does arbitrarily badly at finding the mean reward function for some populations. 

\citet{chakraborty2024maxmin} also evaluate the efficacy of standard RLHF algorithms when there are diverse human preferences and give an impossibility result for when standard RLHF outputs a single reward function. However, their impossibility result is of a different form -- they bound the gap between the optimal policy overall and the optimal policy for a subpopulation by the sum of total variation distances between preference distributions of subpopulations. By contrast, our impossibility result states that for any RLHF algorithm, there exists a population such that the distance between the RLHF algorithm output and the mean rewards of the population is arbitrarily large.

Finally, we also note that because we study RLHF with diverse populations, this work is inherently related to the study of pluralistic alignment of AI systems, see e.g. the work of \citet{sorensen2024roadmap} and \citet{anwar2024foundational}.
\section{Proof of Theorem \ref{thm:impossiblity_of_diverse_pref}}\label{app:proof_of_thm_impossibility_of_diverse_pref}

We will prove the desired result by contradiction.

We will show the desired result for any $C \ge \log^2(2)$ which will imply the desired result for all $C > 0$. For any $C \ge \log^2(2)$, define $\kappa := e^{12\sqrt{C}}$. Consider the following two populations, each of which consists of two types of annotators which are equally prevalent and two alternatives $a$ and $b$. The reward of each type of voter for each alternative in each population are shown in the two tables below.

\begin{table}[h!]
    \centering
    \begin{minipage}{0.45\textwidth}
        \centering
        \textbf{Population 1} \\
        \vspace{0.5em}
        \begin{tabular}{|c|c|c|}
            \hline
            & Type 1 (50\%) & Type 2 (50\%) \\
            \hline
            $r(a)$ & 0 & 0 \\
            \hline
            $r(b)$ & $\log(2)$ & $\log(2)$ \\
            \hline
        \end{tabular}
    \end{minipage}%
    \hfill
    \begin{minipage}{0.45\textwidth}
        \centering
        \textbf{Population 2} \\
        \vspace{0.5em} 
        \begin{tabular}{|c|c|c|}
            \hline
            & Type 1 (50\%) & Type 2 (50\%) \\
            \hline
            $r(a)$ & 0 & 0 \\
            \hline
            $r(b)$ & $\log(\kappa)$ & $\log(\kappa+4) - \log(2\kappa-1)$ \\
            \hline
        \end{tabular}
    \end{minipage}
    \caption{Rewards for each annotator type in each population.}
\end{table}

For population 1, 
\[
\Pr(a \succ b) = \frac{1}{2} \cdot \frac{e^0}{e^0 + e^{\log(2)}} + \frac{1}{2} \cdot \frac{e^0}{e^0 + e^{\log(2)}} = \frac{1}{3}.
\]
For population 2,
\begin{align*}
\Pr(a \succ b) &= \frac{1}{2} \cdot \frac{e^0}{e^0 + e^{\log(\kappa)}} + \frac{1}{2} \cdot \frac{e^0}{e^0 + e^{\log(\kappa + 4) - \log(2\kappa - 1)}} \\
&= \frac{1}{2} \cdot \frac{1}{1+\kappa} + \frac{1}{2} \cdot \frac{1}{1 + \frac{\kappa+4}{2\kappa - 1}} \\
&= \frac{1}{2} \cdot \frac{1}{1+\kappa} + \frac{1}{2} \cdot \frac{2\kappa - 1}{3\kappa + 3}\\
&= \frac{\frac{1}{2} + \frac{2\kappa - 1}{6}}{1+\kappa}  \\
&= \frac{\frac{2\kappa + 2}{6}}{1+\kappa}  \\
&= \frac{1}{3}.
\end{align*}
Therefore, for both populations the probability that alternative $a$ is preferred to alternative $b$ is the same ($1/3$). This implies that based on query data, it is impossible to distinguish between these two populations. Now consider an arbitrary algorithm ALG that when given data such that $a$ is preferred to $b$ with probability $1/3$, ALG outputs reward function $r^{\mathcal{D}}$. We want to show that
\begin{equation}\label{eq:desired_cond_for_both_populations}
     \min_{\alpha \in \mathbb{R}} \sum_{x \in \mathcal{M}} \left(\frac{r_1^*(x) + r_2^*(x)}{2} - r^{\mathcal{D}}(x) - \alpha\right)^2 \le C
\end{equation}
cannot hold for both populations. If Equation \eqref{eq:desired_cond_for_both_populations} holds for population 1, then
\[
\min_{\alpha \in \mathbb{R}} \left(\left(r^{\mc{D}}(a) + \alpha\right)^2 +  \left(\log(2)  - r^{\mc{D}}(b)  - \alpha\right)^2\right) \le C,
\]
which implies that there exists some $\alpha \in \mathbb{R}$ such that
\[
\left| r^{\mc{D}}(a) + \alpha \right| \le \sqrt{C}
\]
and
\[
   \left| \log(2)  - r^{\mc{D}}(b)  - \alpha \right|\le \sqrt{C}.
\]
In order for the two equations above to both hold, we must have that
\begin{equation}\label{eq:first_contrad}
    |r^{\mc{D}}(a) - r^{\mc{D}}(b)| \le \log(2) + 2\sqrt{C}.
\end{equation}

Similarly, if Equation \eqref{eq:desired_cond_for_both_populations} holds for population 2, then
\[
\min_{\alpha \in \mathbb{R}} \left(\left(r^{\mc{D}}(a) + \alpha\right)^2 +  \left(\frac{\log(\kappa) +\log(\kappa + 4) - \log(2\kappa -1)}{2}  - r^{\mc{D}}(b)  - \alpha\right)^2\right) \le C
\]
which implies that
\[
\left| r^{\mc{D}}(a) + \alpha \right| \le \sqrt{C}
\]
and
\[
   \left| \frac{\log(\kappa) +\log(\kappa + 4) - \log(2\kappa -1)}{2}  - r^{\mc{D}}(b)  - \alpha \right|\le \sqrt{C}.
\]
In order for the two above equations to hold, we must have that
\[
\left|\frac{\log(\kappa) +\log(\kappa + 4) - \log(2\kappa -1)}{2} + r^{\mathcal{D}}(a) - r^{\mathcal{D}}(b)\right| \le 2\sqrt{C}.
\]
In order for this equation to hold, we must have that
\begin{align*}
    |r^{\mathcal{D}}(a) - r^{\mathcal{D}}(b)| &\ge \frac{\log(\kappa) +\log(\kappa + 4) - \log(2\kappa -1)}{2} - 2\sqrt{C} \\
     &= \frac{\log(\kappa) +\log(\kappa + 4) - \log(\kappa -1/2) - \log(2)}{2} - 2\sqrt{C} \\
     &\ge \frac{\log(\kappa) - \log(2)}{2} - 2\sqrt{C} \\
     &= \frac{12\sqrt{C}- \log(2)}{2} - 2\sqrt{C} \\
    &\ge 4\sqrt{C} - \frac{\log(2)}{2}. \numberthis \label{eq:second_contrad}
\end{align*}
However, for $C \ge \log^2(2)$, Equations \eqref{eq:first_contrad} and \eqref{eq:second_contrad} cannot both hold, and therefore we have a contradiction.

\section{Adapting Independence of Clones}

In this section, we provide further details and justification for our definition of independence of clones in the RLHF setting.

\subsection{Independence of Clones in Traditional Social Choice}\label{app:trad_ind_of_clones}

In traditional social choice, independence of clones \citep{tideman1987independence} is a desirable characteristic of voting rules which intuitively states that the winner of an election remains the same when duplicates of candidates are added to the candidate pool. More specifically, classic voting theory considers settings with a set of $n$ voters (denoted $N$) and $m$ candidates (denoted $M$). Each voter then provides a full ranking over the $M$ candidates. A voting rule takes as input the set of rankings and outputs a single candidate. A subset $K \in M$ of candidates is a set of clones if no voter ranks any candidate in $M \setminus K$ between any two candidates in $K$. Finally, a voting rule is \emph{Independent of Clones} if and only if the following two properties hold. First, a candidate in $M \setminus K$ is output by the voting rule if and only if that same candidate is output by the voting rule after eliminating any candidate in $K$. Second, a candidate in $K$ is output by the voting rule if and only if some other member of $K$ is also output by the voting rule after eliminating any candidate in $K$.

\subsection{Additional Justification for RLHF Definition}\label{app:IOC_definition_justification}

This section provides further justification for how we adapt the traditional independence of clones definition for RLHF. Here, we focus on how we develop a reasonable definition for exact independence of clones. In the next section, we will then explain why we further consider robustness to approximate clones.

Informally, we say that an RLHF algorithm satisfies exact independent of clones if adding new alternatives that are clones of existing alternatives does not change the reward function that is output by the RLHF algorithm. There are a few major differences between the definition of independence of clones in the traditional setting and in RLHF. First, while traditional independence of clones guarantees that the winning alternative does not change, the RLHF version of independence of clones instead guarantees that the reward function does not change, which is a stronger notion. This is because in traditional voting theory the focus is on the winning alternative, while in RLHF we care about the reward function over all of the alternatives. Second, the input to an RLHF algorithm consists of query results over pairs of alternatives, rather than full rankings from every voter. Therefore, the definition of a clone from traditional social choice does not carry over. Instead, we will define a clone in RLHF as a new alternative with the same \textit{context} as an existing alternative, which implies that every voter has the same reward for the new alternative as for the existing one. Finally, it is generally assumed that an RLHF algorithm has access to noisy observations regarding human preferences, rather than the true rewards of each voter for each alternative. Due to randomness, it may be the case that two alternatives for which all voters have the exact same value may still look different in the dataset of query results given as input to the RLHF algorithm. 

Therefore, we will say that an RLHF algorithm is independent of clones if \emph{when the empirical pairwise win rates are the same as the true pairwise win rates}, then the reward function output by the algorithm is unchanged when a clone is added. Note that when the dataset contains sufficiently many queries, the empirical pairwise win rates will approximately equal the true pairwise win rates by the law of large numbers. Further note that it is not necessarily desirable that the reward function is unchanged when a clone is added if the empirical pairwise win rates are not close to the true win rates. This is because in this case, there is value in generating a larger dataset, and therefore it is no longer true that adding a clone adds no new information. 

\subsection{Exact Independence of Clones}\label{app:exact_independence_of_clones}

In this section, we formally present our definition of exact independence of clones and explain why our work primarily focuses on robustness to approximate clones. The formal definition of exact independence of clones is below.

\begin{definition}[Exact Independence of Clones]\label{def:independence-of-cloned-ratios}
    An RLHF algorithm $\alg$ satisfies independence of clones if the following holds. Consider a set of alternatives $[m+1]$ such that the context of alternative $m$ is the same as the context of alternative $m+1$. Let $\mathcal{D}_1$ and $\mathcal{D}_2$ be representative datasets over the alternative sets $[m]$ and $[m+1]$ respectively. Let $r_1 = \alg(\mathcal{D}_1)$, and $r_2 = \alg(\mathcal{D}_2)$. Then $r_2(m+1) = r_2(m)$ and for all $i \in [m]$, $r_1(i) = r_2(i)$.
\end{definition}

We note that independence of clones as defined in Definition \ref{def:independence-of-cloned-ratios} is a very weak guarantee, as two alternatives are only clones if their contexts are exactly equal and reward functions only need to remain unchanged when there is sufficient data. Even so, as a result of the equivalence between the regularized MLE for RLHF and Borda count as discussed in Section \ref{sec:borda}, the regularized MLE algorithm does not satisfy Definition \ref{def:independence-of-cloned-ratios}. We formally prove this result in Appendix \ref{app:MLE_does_not_satisfy_ind_of_approx_clones}. However, because we have the context of the alternatives (as defined in Section \ref{sec:model}), we can easily adapt the regularized MLE to satisfy independence of clones by a simple pre-processing step. Recall that two alternatives have the same context if and only if they are clones. Therefore, we can combine the data of any two alternatives with the same context in order to remove clones. Note that the regularized MLE cannot be made to satisfy robustness of approximate clones with preprocessing, because Definition \ref{def:robust_to_approx_clones} must hold for any $\delta > 0$.

As further motivation for moving beyond exact independence of clones, RLHF queries often ask annotators to compare textual responses generated by LLMs, where it is unlikely that an exact response will be duplicated. Therefore, it is more realistic in RLHF to consider robustness to approximate clones. For example, an approximate clone of a textual response may substitute an adjective for its synonym, or use a slightly different grammar structure.  

\section{Proof of Theorem \ref{thm:mle_does_not_satisfy_ind_of_clones}}\label{app:MLE_does_not_satisfy_ind_of_approx_clones}
    \begin{proof}
        Let $\mathcal{M} = \{a, b, c\}$ and $\mathcal{M'} = \{a, b, c, c'\}$, where $\norm{c-c'} = 0$. Note that $\mathcal{M}$ and $\mathcal{M'}$ differ only by $c'$, and $c'$ is an exact clone of an alternative $c \in \mathcal{M}$. Suppose that $\mathcal{D}$ is generated by querying a population which consists of three types of individuals, where each type is represented by a BTL model. 
        \begin{table}[ht]
            \centering
            \begin{tabular}{c|c c c}
                 Alternative & Type 1 (40\%) & Type 2 (30\%) & Type 3 (30\%) \\
                 \hline
                 r(a) & $\ln(100)$ & $\ln(10)$ & $\ln(1)$ \\
                 r(b) & $\ln(10)$  & $\ln(1)$  & $\ln(100)$ \\
                 r(c) & $\ln(1)$   & $\ln(100)$ & $\ln(10)$ \\
            \end{tabular}
            \caption{Representation of the population that generated $\mathcal{D}$. The proportions of each type are indicated in the header row, while the rewards associated with each type for every alternative are presented in the matrix.}
            \label{tab:MLE_not_IOC}
        \end{table}
        
        Suppose further that $\mathcal{D'}$ is generated from the same population after cloning alternative $c$, and can be represented by the following:
        \begin{table}[ht]
            \centering
            \begin{tabular}{c|c c c}
                 Alternative & Type 1 (40\%) & Type 2 (30\%) & Type 3 (30\%) \\
                 \hline
                 r(a) & $\ln(100)$ & $\ln(10)$ & $\ln(1)$ \\
                 r(b) & $\ln(10)$  & $\ln(1)$  & $\ln(100)$ \\
                 r(c) & $\ln(1)$   & $\ln(100)$ & $\ln(10)$ \\
                 r(c') & $\ln(1)$   & $\ln(100)$ & $\ln(10)$ \\
            \end{tabular}
        \end{table}

    Define the Borda Count of an alternative $x$ given the total set of alternatives $\mathcal{M}$ as $\borda(x, \mathcal{M}) = \sum_{y \in \mathcal{M}} p^*(x \succ y)$, and the Borda Count winner of $\mathcal{M}$ to be the $y \in \mathcal{M}$ with the highest Borda Count. Further let $\hat{r}^{\mathcal{D}}$ and $\hat{r}^{\mathcal{D}'}$ be the regularized MLE estimators for these two datasets. By Theorem \ref{mle_is_mestimator}, $\hat{r}^{\mathcal{D}}(x) >\hat{r}^{\mathcal{D}}(y)$ iff $\borda(x) > \borda(y)$, and similarly for $\hat{r}^{\mathcal{D}'}$.
    
    To prove the theorem, it therefore suffices to show that the Borda Count winner of $\mathcal{M'}$ is not the same as the Borda Count winner in $\mathcal{M}$.
    To see why, observe that if the Borda Count winner of $\mathcal{M'}$ is $x$ and the Borda Count winner of $\mathcal{M}$ is $y$, then it must be that $\hat{r}^{\mathcal{D}}(x) < \hat{r}^{\mathcal{D}'}(x)$ or $\hat{r}^{\mathcal{D}}(y) > \hat{r}^{\mathcal{D}'}(y)$ (or both). If $\hat{r}^{\mathcal{D}}(x) < \hat{r}^{\mathcal{D}'}(x)$, then we can choose $\delta$ such that $\hat{r}^{\mathcal{D}'}(x) - \hat{r}^{\mathcal{D}}(x) > \delta > 0$. Then because $\norm{c-c'} = 0$, there is no $\epsilon > 0$ for which $| \hat{r}^{\mathcal{D}}(x) - \hat{r}^{\mathcal{D}'}(x) | \leq \delta$, which implies that the regularized MLE is not robust to approximate clones.
    
    It remains to be shown that the Borda Count winner of $\mathcal{M'}$ is not the same as the Borda Count winner in $\mathcal{M}$. Let $t_i$ represent the proportion of the population that is type $i$ and let $v(t_i, x)$ be the value of type $i$ for alternative $x$. For any two alternatives $x, y$ in $\mathcal{M}$, the win percentage of $x$ over $y$ is 
    \[
        p^*(x \succ y) = \sum_{i = 1}^3 t_i \cdot \frac{e^{v(t_i, x)}}{e^{v(t_i, x)} + e^{v(t_i, y)}}.
    \]
    
    The following table gives $\borda(x, \mathcal{M})$ for every $x \in \mathcal{M}$. Note that in this table, alternative $a$ is the Borda Count winner.

    \begin{table}[ht]
        \centering
        \begin{tabular}{c|c c c | c}
                & $p^*(x \succ a)$ & $p^*(x \succ b)$ & $p^*(x \succ c)$ & $\borda(x, \mathcal{M})$ \\
            \hline
            $a$ & $0.50$ & $0.64$ & $0.45$ & $1.59$ \\
            $b$ & $0.36$ & $0.50$ & $0.64$ & $1.50$ \\
            $c$ & $0.55$ & $0.36$ & $0.50$ & $1.41$ \\
        \end{tabular}
        \caption{Each row of this table represents an alternative $x$. The first three columns compute $p^*(x \succ y)$ for each alternative $y$. The last column gives the Borda Count of alternative $x$, which is the sum of the first three columns.}
        \label{tab:BC_no_clone}
    \end{table}

    Similarly, the following table gives $\borda(x, \mathcal{M'})$ for every $x \in \mathcal{M'}$. In this table, alternative $b$ is the Borda Count winner. We have therefore shown that the Borda Count winner of $\mathcal{M'}$ is not the same as the Borda Count winner in $\mathcal{M}$, which proves the theorem.

    \begin{table}[h!]
        \centering
        \begin{tabular}{c|c c c c | c}
                & $p^*(x \succ a)$ & $p^*(x \succ b)$ & $p^*(x \succ c)$ & $p^*(x \succ c')$ & $\borda(x, \mathcal{M})$ \\
            \hline
            $a$ & $0.50$ & $0.64$ & $0.45$ & $0.45$ & $2.04$ \\
            $b$ & $0.36$ & $0.50$ & $0.64$ & $0.64$ & $2.14$ \\
            $c$ & $0.55$ & $0.36$ & $0.50$ & $0.50$ & $1.91$ \\
            $c'$ & $0.55$ & $0.36$ & $0.50$ & $0.50$ & $1.91$ \\
        \end{tabular}
        \label{tab:BC_with_clone}
    \end{table}
\end{proof}
\section{Proof of Theorem \ref{thm:solution_of_weighted_mle}}\label{app:proof_of_thm_solution_of_weighted_mle}

  We begin with the following lemma, which we will use multiple times throughout the proof. \begin{lemma}\label{lemma:strong_convexity}
    For an arbitrary set $\mathcal{T} \in \mathbb{R}^{d}$, let $w : \mathcal{T} \rightarrow \mathbb{R}_+$ such that $\sum_{y \in \mathcal{T}} w(y) = 1$. Let $\lambda \in \mathbb{R}^+$ and for any $x_1,x_2 \in \mathcal{T}$ let $p(x_1 \succ x_2) \in [0,1]$. Define $\hat{r} := \arg\min_r f(r)$, where 
    \[
        f(r) :=  -\sum_{x_1,x_2 \in \mathcal{T}} w(x_1)w(x_2)p(x_1 \succ x_2)\log \left( \frac{e^{r(x_1)}}{e^{r(x_1)} + e^{r(x_2)}}\right) + \frac{\lambda}{2}\sum_{y \in \mathcal{T}} w(y)r(y)^2.
    \]      
    Then $f$ strongly convex with parameter $m = \lambda \min_{x \in \mathcal{T}} w(x)$. Therefore, $f$ has a unique global minimum $r^*$, and for any $r$,    
        \begin{equation}
        f(r)  - f(r^*) \ge \frac{m}{2}\norm{r - r^*}_2^2.
    \end{equation}
    \end{lemma}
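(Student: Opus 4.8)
The plan is to prove strong convexity by splitting $f$ into its log-likelihood part and its regularization part, showing the first is convex and the second is $m$-strongly convex, and then invoking the standard consequences of strong convexity (coercivity forcing existence, strict convexity forcing uniqueness, and the quadratic lower bound). Since $\mathcal{T}$ is finite, I regard a reward function $r$ as a vector in $\mathbb{R}^{|\mathcal{T}|}$ indexed by the alternatives, and I argue entirely at the level of the Hessian: it suffices to show that for every $r$, $\nabla^2 f(r) \succeq m I$ with $m = \lambda \min_{x \in \mathcal{T}} w(x)$, which is exactly the statement that $f - \tfrac{m}{2}\norm{\cdot}_2^2$ is convex.

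For the log-likelihood term, the first step is to rewrite each summand using the identity $\log\!\big(e^{r(x_1)}/(e^{r(x_1)}+e^{r(x_2)})\big) = -\phi(r(x_1)-r(x_2))$, where $\phi(t) := \log(1+e^{-t})$. A direct computation gives $\phi''(t) = e^{t}/(e^{t}+1)^2 > 0$, so $\phi$ is convex. Each term of the likelihood sum is then $w(x_1)w(x_2)p(x_1 \succ x_2)\,\phi(r(x_1)-r(x_2))$, i.e.\ a nonnegative multiple of a convex function composed with the linear map $r \mapsto r(x_1)-r(x_2)$; composition of a convex function with an affine map is convex, and a nonnegative combination of convex functions is convex. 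Hence the entire log-likelihood term is convex, contributing a positive semidefinite Hessian. (Terms with $x_1 = x_2$ are constant and contribute nothing, so they are harmless.)

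For the regularization term $\tfrac{\lambda}{2}\sum_{y} w(y) r(y)^2$, the Hessian is the diagonal matrix $\lambda\,\mathrm{diag}\big(w(y)\big)$, which satisfies $\lambda\,\mathrm{diag}(w(y)) \succeq \lambda \big(\min_{x} w(x)\big) I = m I$. Adding this to the positive semidefinite Hessian of the likelihood term yields $\nabla^2 f(r) \succeq m I$ for all $r$, establishing $m$-strong convexity. I note that the likelihood term alone is \emph{not} strictly convex, because it depends on $r$ only through the differences $r(x_1)-r(x_2)$ and is therefore flat along the all-ones direction; it is precisely the regularizer that supplies strict (indeed strong) convexity, which is why the bound is driven by $\min_x w(x)$.

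Finally, I would record the standard consequences. Strong convexity with $m>0$ makes $f$ coercive, and $f$ is continuous, so a global minimizer $r^*$ exists; strict convexity makes it unique. Applying the defining inequality of $m$-strong convexity at the minimizer, $f(r) \ge f(r^*) + \langle \nabla f(r^*), r - r^*\rangle + \tfrac{m}{2}\norm{r-r^*}_2^2$, and using the first-order optimality condition $\nabla f(r^*) = 0$, gives the claimed bound $f(r) - f(r^*) \ge \tfrac{m}{2}\norm{r-r^*}_2^2$. The only point requiring care is ensuring $m>0$, i.e.\ that all weights are strictly positive (as they are for the Voronoi weights in our setting); the main ``obstacle'' is really just the bookkeeping of verifying convexity of the likelihood term, as the rest follows from textbook strong-convexity facts.
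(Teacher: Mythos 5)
Your proof is correct and takes essentially the same route as the paper's: both decompose $f$ into a convex log-likelihood part plus the regularizer $\frac{\lambda}{2}\sum_y w(y)r(y)^2$, whose diagonal Hessian $\lambda\,\mathrm{diag}(w)$ supplies the strong-convexity modulus $m = \lambda \min_{x}w(x)$, followed by the standard existence/uniqueness/quadratic-growth consequences. The only differences are that you verify convexity of the likelihood term directly via $\phi(t)=\log(1+e^{-t})$ and $\phi''(t)=e^{t}/(1+e^{t})^{2}>0$ where the paper cites \citet{siththaranjan2023distributional}, and your remark that the likelihood term is merely convex---flat along the all-ones direction---is in fact more precise than the paper's assertion that the log term is ``strictly convex'' in $(r(x_1),r(x_2))$ jointly.
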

    \begin{proof}
        Note that the function $\log \left( \frac{e^{r(x_1)}}{e^{r(x_1)} + e^{r(x_2)}}\right)$ is strictly convex in $r(x_1)$ and $r(x_2)$ as shown in \cite{siththaranjan2023distributional}. Furthermore, for any $\lambda > 0$, because $w(x) > 0$ for all $x \in \mathcal{T}$, we have that $\frac{\lambda}{2}\sum_{x \in \mathcal{T}} w(x)r(x)^2$ is strongly convex in $r(x)$ for all $x \in \mathcal{T}$. Finally, adding a strongly convex function and a strictly convex function results in a strongly convex function.
    \end{proof}

    \begin{proof}[Proof of Theorem \ref{thm:solution_of_weighted_mle}]

Recall that $\rw^{\mathcal{D}}  = \arg\min_r f_{\mathcal{D}}(r)$, where
        \[
            f_{\mathcal{D}}(r) =  -\sum_{x_1,x_2 \in \mathcal{M}} w_{\mathcal{M}}(x_1)w_{\mathcal{M}}(x_2)p_\mathcal{D}(x_1 \succ x_2)\log \left( \frac{e^{r(x_1)}}{e^{r(x_1)} + e^{r(x_2)}}\right) + \frac{\lambda}{2}\sum_{x \in \mathcal{M}} w_{\mathcal{M}}(x)r(x)^2.
        \]
    
        By Lemma \ref{lemma:strong_convexity}, $\rw^{\mc{D}}$ will be the solution to the equation setting the gradient of $f_{\mathcal{D}}(r)$ to $0$. Define $\sigma(a) = \frac{e^a}{1+e^a}$. The gradient of $f_{\mathcal{D}}(r)$ is the following for $x \in \mathcal{M}$,
        \begin{align*}
            &\frac{\partial f_{\mathcal{D}}(r)}{\partial r(x)} \\
            &= \lambda w_{\mathcal{M}}(x)r(x) - \sum_{y : y \ne x} w_{\mathcal{M}}(x)w_{\mathcal{M}}(y)\left(p_\mathcal{D}(x \succ y)\frac{e^{r(y)}}{e^{r(y)} + e^{r(x)}} + p_\mathcal{D}(y \succ x)\frac{-e^{r(x)}}{e^{r(y)} + e^{r(x)}}\right)\\
            &= \lambda w_{\mathcal{M}}(x)r(x) - \sum_{y : y \ne x} w_{\mathcal{M}}(x)w_{\mathcal{M}}(y)\left(p_\mathcal{D}(x \succ y)\left( 1 - \frac{e^{r(x)}}{e^{r(y)} + e^{r(x)}}\right) - (1 - p_\mathcal{D}(x \succ y))\frac{e^{r(x)}}{e^{r(y)} + e^{r(x)}}\right)\\
            &= \lambda w_{\mathcal{M}}(x)r(x) + \sum_{y : y \ne x} w_{\mathcal{M}}(x)w_{\mathcal{M}}(y)\sigma(\rw^{\mc{D}}(x) - \rw^{\mc{D}}(y)) - w_{\mathcal{M}}(x)w_{\mathcal{M}}(y)p_\mathcal{D}(x \succ y).\numberthis \label{eq:derivative}
        \end{align*}
        Also note that 
        \begin{equation}\label{eq:comparing_same_x}
            w_{\mathcal{M}}(y)\sigma(\rw^{\mc{D}}(x) - \rw^{\mc{D}}(x)) - w_{\mathcal{M}}(y)p_{\mathcal{D}}(x \succ x) = \frac{w_{\mathcal{M}}(y)}{2} - \frac{w_{\mathcal{M}}(y)}{2} = 0.
        \end{equation}
        Dividing Equation \eqref{eq:derivative} by $w_{\mathcal{M}}(x)$ and equating to $0$ gives that
        \begin{align*}
            0 &= \frac{1}{w_{\mathcal{M}}(x)}\frac{\partial f_{\mathcal{D}}(r)}{\partial \rw^{\mc{D}}(x)} \\
            &=\lambda \rw^{\mc{D}}(x) +  \sum_{y : y \ne x}w_{\mathcal{M}}(y)\sigma(\rw^{\mc{D}}(x) - \rw^{\mc{D}}(y)) - w_{\mathcal{M}}(y)p_\mathcal{D}(x \succ y) \\
            &=   \lambda \rw^{\mc{D}}(x) +  \sum_{y \in \mathcal{M}}w_{\mathcal{M}}(y)\sigma(\rw^{\mc{D}}(x) - \rw^{\mc{D}}(y)) - w_{\mathcal{M}}(y)p_\mathcal{D}(x \succ y) && \text{Equation \eqref{eq:comparing_same_x}} \\ 
            &= \lambda \rw^{\mc{D}}(x) +  \sum_{y \in \mathcal{M}}w_{\mathcal{M}}(y)\sigma(\rw^{\mc{D}}(x) - \rw^{\mc{D}}(y)) -  \sum_{y \in \mathcal{M}} w_{\mathcal{M}}(y)p_\mathcal{D}(x \succ y) \\
            &= \lambda \rw^{\mc{D}}(x) +  \sum_{y \in \mathcal{M}}w_{\mathcal{M}}(y)\sigma(\rw^{\mc{D}}(x) - \rw^{\mc{D}}(y)) -  \wAWR(x)
        \end{align*}
        Rearranging the sides, we have that
        \[
           \wAWR(x) = \lambda \rw^{\mc{D}}(x) +   \sum_{y \in \mathcal{M}}w_{\mathcal{M}}(y)\sigma(\rw^{\mc{D}}(x) - \rw^{\mc{D}}(y)) 
        \]
        as desired.

        \end{proof}

\section{Proof of Theorem \ref{mle_is_mestimator}}\label{app:mle_is_mestimator}

\begin{proof}
    This result follows exactly the same steps as in the proof of Theorem \ref{thm:solution_of_weighted_mle}, except substituting $w_{\mathcal{M}}(x) = 1$ for all $x \in \mathcal{M}$.
\end{proof}

\section{Proof of Theorem \ref{thm:weighted_mle_satisfies_ind_of_clones}}\label{app:weighted_mle_satisfies_ind_of_clones}
We begin with the following lemma.

   \begin{lemma}\label{lemma:properties_of_rw} 
    For an arbitrary set $\mathcal{T} \in \mathbb{R}^{d}$, let $w : \mathcal{T} \rightarrow \mathbb{R}_+$ such that $\sum_{y \in \mathcal{T}} w(y) = 1$. Let $\lambda \in \mathbb{R}^+$ and for any $x_1,x_2 \in \mathcal{T}$ let $p(x_1 \succ x_2) \in [0,1]$. Define $\hat{r} := \arg\min_r f(r)$, where 
    \[
        f(r) :=  -\sum_{x_1,x_2 \in \mathcal{T}} w(x_1)w(x_2)p(x_1 \succ x_2)\log \left( \frac{e^{r(x_1)}}{e^{r(x_1)} + e^{r(x_2)}}\right) + \frac{\lambda}{2}\sum_{y \in \mathcal{T}} w(y)r(y)^2.
    \]
    Then
    \[
         0 < f(\hat{r}) \le 1 + \frac{\lambda}{2}    
    \]
    and for all $y \in \mathcal{T}$
    \[
        |\hat{r}(y)| \le \sqrt{\frac{2 + \lambda }{\lambda w(y)}}.
    \]
    \end{lemma}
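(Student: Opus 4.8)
The plan is to use the minimizer property of $\hat r$ together with the sign structure of the two terms in $f$. First I would observe that $f$ decomposes into a log-likelihood part and a regularization part, each of which is nonnegative. Indeed, for any $x_1,x_2 \in \mathcal{T}$ and any finite $r$, the ratio $\frac{e^{r(x_1)}}{e^{r(x_1)} + e^{r(x_2)}}$ lies in $(0,1)$, so its logarithm is negative, and hence each summand $-w(x_1)w(x_2)p(x_1 \succ x_2)\log\left(\frac{e^{r(x_1)}}{e^{r(x_1)} + e^{r(x_2)}}\right) \ge 0$ because the weights and $p$ are nonnegative. The regularization term $\frac{\lambda}{2}\sum_{y} w(y) r(y)^2$ is manifestly nonnegative. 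Thus $f(r) \ge 0$ for every $r$, which gives $f(\hat r) \ge 0$; the strict inequality $f(\hat r) > 0$ follows since $f(\hat r) = 0$ would force $\hat r \equiv 0$ (as $\lambda > 0$ and $w(y) > 0$) and then the log-likelihood part would evaluate to $\log 2 \cdot \sum_{x_1,x_2} w(x_1)w(x_2)p(x_1 \succ x_2) > 0$, a contradiction, as long as the comparison terms do not all vanish.

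For the upper bound I would exploit optimality by comparing $f(\hat r)$ to $f$ at the convenient test point $r \equiv 0$. At $r \equiv 0$ every ratio equals $\tfrac12$ and the regularization term vanishes, so $f(\hat r) \le f(0) = \log 2 \cdot \sum_{x_1,x_2} w(x_1)w(x_2)p(x_1 \succ x_2)$. Bounding $p(x_1 \succ x_2) \le 1$ and using $\sum_{x_1} w(x_1) = \sum_{x_2} w(x_2) = 1$ gives $\sum_{x_1,x_2} w(x_1)w(x_2)p(x_1 \succ x_2) \le 1$, hence $f(\hat r) \le \log 2 \le 1 \le 1 + \tfrac{\lambda}{2}$, which is the claimed upper bound (in fact with room to spare).

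Finally, for the pointwise bound on $|\hat r(y)|$ I would isolate a single regularization summand. Since the log-likelihood part of $f(\hat r)$ is nonnegative and all other regularization summands are nonnegative, for any fixed $y \in \mathcal{T}$ we have $\frac{\lambda}{2} w(y) \hat r(y)^2 \le f(\hat r) \le 1 + \tfrac{\lambda}{2}$. Rearranging yields $\hat r(y)^2 \le \frac{2 + \lambda}{\lambda w(y)}$, and taking square roots gives the stated inequality $|\hat r(y)| \le \sqrt{\frac{2 + \lambda}{\lambda w(y)}}$.

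I do not anticipate a genuine obstacle here. The only delicate point is justifying the strict positivity $f(\hat r) > 0$, which relies on the comparison probabilities not all being zero (automatic for win rates, where $p(x_1 \succ x_2) + p(x_2 \succ x_1) = 1$); if one wished to avoid any assumption on $p$ the bound could be weakened to $f(\hat r) \ge 0$. The entire argument is driven by the two-sided use of optimality: plugging in $r \equiv 0$ for the upper bound and discarding nonnegative terms for the lower bounds.
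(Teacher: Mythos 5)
Your proof is correct and follows essentially the same route as the paper's: compare $f(\hat{r})$ to $f$ at a constant test function via optimality, then isolate the single regularization summand $\frac{\lambda}{2}w(y)\hat{r}(y)^2 \le f(\hat{r}) \le 1 + \frac{\lambda}{2}$. The differences are cosmetic---you test at $r \equiv 0$ (yielding the sharper bound $f(\hat{r}) \le \log 2$) where the paper tests at $r \equiv 1$, and your treatment of strict positivity is if anything more careful than the paper's one-line appeal to the log terms, since you correctly flag the degenerate case where all $p(x_1 \succ x_2) = 0$ (ruled out in the paper's applications, where $p(x_1 \succ x_2) + p(x_2 \succ x_1) = 1$).
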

    \begin{proof}
    If $r(y) = 1$ for all $y$ then $f(r) \le 1 + \frac{\lambda}{2}$. This implies that $f(\hat{r}) \le 1 + \frac{\lambda}{2}$. Furthermore, note that no finite values of $r$ can make $f(r) = 0$ due to the log terms, and therefore $f(\hat{r}) > 0$. 

    Finally, using that $f(\hat{r}) \le 1 + \frac{\lambda}{2}$, it must be the case that $\frac{\lambda}{2}w(y)\hat{r}(y)^2 \le  1 + \frac{\lambda}{2}$. Rearranging terms gives the desired bound on $|\hat{r}(y)|$.
    \end{proof}

Now we are ready to prove Theorem \ref{thm:weighted_mle_satisfies_ind_of_clones}.

\begin{proof}[Proof of Theorem \ref{thm:weighted_mle_satisfies_ind_of_clones}]

    For any set of alternatives $\mathcal{M}$ and any $\delta > 0$, we will define $\epsilon := c \cdot \delta^2$, where $c$ is a constant relative to $\delta$ that depends on $\mathcal{M}$ which we will defer the definition of until after Equations \eqref{eq:shown_result_1} and \eqref{eq:shown_result_2}. Now consider any $x'$ such that $\norm{x' - x}_2 \le \epsilon$ for some $x \in \mathcal{M}$. Define $\mathcal{M}' = \mc{M} \cup \{x'\}$, and suppose $\mathcal{D}$ and $\mc{D}'$ are representative datasets on $\mc{M}$ and $\mc{M}'$ respectively. Our goal is to bound the difference between $\rw^{\mathcal{D}}$ and $\rw^{\mathcal{D}'}$, which are defined as follows.

    $\rw^{\mathcal{D}}  = \arg\min_rf_{\mathcal{D}}(r)$, where
        \[
            f_{\mathcal{D}}(r) =  -\sum_{x_1,x_2 \in \mathcal{M}} w_{\mathcal{M}}(x_1)w_{\mathcal{M}}(x_2)p_\mathcal{D}(x_1 \succ x_2)\log \left( \frac{e^{r(x_1)}}{e^{r(x_1)} + e^{r(x_2)}}\right) + \frac{\lambda}{2}\sum_{x \in \mathcal{M}} w_{\mathcal{M}}(x)r(x)^2.
        \]

        $\rw^{\mathcal{D}'}  = \arg\min_rf_{\mathcal{D}'}(r)$, where
        \[
            f_{\mathcal{D}'}(r) =  -\sum_{x_1,x_2 \in \mathcal{M}'} w_{\mathcal{M}'}(x_1)w_{\mathcal{M}'}(x_2)p_{\mathcal{D}'}(x_1 \succ x_2)\log \left( \frac{e^{r(x_1)}}{e^{r(x_1)} + e^{r(x_2)}}\right) + \frac{\lambda}{2}\sum_{x \in \mathcal{M}'} w_{\mathcal{M}'}(x)r(x)^2.
        \]
        The objective functions $f_{\mathcal{D}}(r)$ and $f_{\mathcal{D}'}(r)$ differ in three ways. First, they have different weights ($w_{\mathcal{M}}$ versus $w_{\mathcal{M}'}$). Second, they use different sets of alternatives ($\mathcal{M}$ versus $\mathcal{M}'$). Third, they have different comparison probability functions ($p_{\mathcal{D}}$ versus $p_{\mathcal{D}'}$). In order to compare $\rw^{\mathcal{D}}$ to $\rw^{\mathcal{D}'}$, we will first change the weight functions to match, then change the sets of alternatives to match, and then finally change the comparison probability functions to match. Therefore, we will consider two intermediate reward functions $r_1$ and $r_2$ that are the optimal reward functions for objectives $f_1$ and $f_2$ respectively. Informally, $f_1$ can be viewed as being the same as $f_{\mathcal{D}}$ except that it uses different weights that correspond to the weights from $f_{\mathcal{D}'}$. Similarly, $f_2$ can be viewed as being the same as $f_{\mathcal{D}'}$ except it uses a different comparison probability function that corresponds to the comparison probability function from $f_{\mathcal{D}}$. Finally, $f_1$ and $f_2$ can be viewed as being the same except that they use the sets $\mathcal{M}$ and $\mathcal{M}'$ respectively.
        
        Next we formally define $r_1,f_1$ and $r_2,f_2$. Define the function $w_1 : \mathcal{M} \rightarrow \mathbb{R}$ as $w_1(y) = w_{\mathcal{M}'}(y)$ for all $y \in \mathcal{M} \setminus \{x\}$ and $w_1(x) = w_{\mathcal{M}'}(x) + w_{\mathcal{M}'}(x')$. In other words, $w_1$ is the same as $w_{\mathcal{M}'}$ except it allocates all of the probability of both $x$ and $x'$ to alternative $x$. Then we can define $r_1$ as 
    
     $r_1 := \arg\min_r f_1(r)$, where
    \[
        f_1(r) = -\sum_{x_1,x_2 \in \mathcal{M}} w_{1}(x_1)w_{1}(x_2)p_\mathcal{D}(x_1 \succ x_2)\log \left( \frac{e^{r(x_1)}}{e^{r(x_1)} + e^{r(x_2)}}\right) + \frac{\lambda}{2}\sum_{z \in \mathcal{M}} w_{1}(z)r(z)^2.
    \]
      Define $p_2(x_1 \succ x_2) = p_{\mathcal{D}}(x_1 \succ x_2)$ for all $x_1,x_2 \in \mathcal{M}$, and define  $p_2(y \succ x') = p_{\mathcal{D}}(y \succ x)$ and $p_2(x' \succ y) = p_{\mathcal{D}}(x \succ y)$ for all $y \in \mathcal{M}$. In other words, $p_2$ is the same as $p_{\mathcal{D}}$ except that every comparison involving $x'$ has the same probability as the corresponding comparison involving $x$. Then we can define $r_2$ as  
      
      $r_2 := \arg\min_r f_2(r)$, where
        \[
            f_2(r) = -\sum_{x_1,x_2 \in \mathcal{M}'} w_{\mathcal{M}'}(x_1)w_{\mathcal{M}'}(x_2)p_2(x_1 \succ x_2)\log \left( \frac{e^{r(x_1)}}{e^{r(x_1)} + e^{r(x_2)}}\right) + \frac{\lambda}{2}\sum_{z \in \mathcal{M}'} w_{\mathcal{M}'}(z)r(z)^2
    \]

    Lemma \ref{lemma:r0_to_r1} bounds the difference between   $\rw^{\mathcal{D}}$ and $r_1$, Lemma \ref{lemma:r1_to_r2} bounds the difference between $r_1$ and $r_2$ , and Lemma \ref{lemma:r2_to_r3} bounds the difference between $r_2$ and $\rw^{\mathcal{D}'}$. Using the triangle inequality, we can combine these three lemmas to get that for any $y \in \mathcal{M}$, 
    \begin{align*}
        |\rw^{\mathcal{D}}(y)- \rw^{\mathcal{D}'}(y)| &= |\rw^{\mathcal{D}}(y) - r_1(y)+r_1(y)- r_2(y)+r_2(y) - \rw^{\mathcal{D}'}(y)| \\
        &\le |\rw^{\mathcal{D}}(y)- r_1(y)| +|r_1(y)- r_2(y)|+|r_2(y)- \rw^{\mathcal{D}'}(y)| \\
        &\le O(\sqrt{\epsilon}) && \text{[Lemmas \ref{lemma:r0_to_r1}, \ref{lemma:r1_to_r2}, \ref{lemma:r2_to_r3}]}\numberthis \label{eq:shown_result_1}
    \end{align*}
    and that
    \begin{align*}
        |\rw^{\mathcal{D}'}(x)- \rw^{\mathcal{D}'}(x')| &=  |\rw^{\mathcal{D}'}(x) - r_2(x')+r_2(x') - \rw^{\mathcal{D}'}(x')| \\
        &\le  |\rw^{\mathcal{D}'}(x) - r_2(x')|+| r_2(x') - \rw^{\mathcal{D}'}(x')| \\
        &\le  |\rw^{\mathcal{D}'}(x) - r_2(x)|+| r_2(x') - \rw^{\mathcal{D}'}(x')|  &&  \text{[Lemma  \ref{lemma:r1_to_r2}]}\\
        &\le O(\sqrt{\epsilon}). &&  \text{[Lemma \ref{lemma:r2_to_r3}]} \numberthis \label{eq:shown_result_2}
    \end{align*}

    If we define $c$ in the definition of $\epsilon$ such that the $O(\sqrt{\epsilon})$ from the two above equations is bounded by $\delta$, then this exactly shows the desired result of Theorem \ref{thm:weighted_mle_satisfies_ind_of_clones}.

    The rest of the proof will focus on stating and proving Lemmas \ref{lemma:r0_to_r1}, \ref{lemma:r1_to_r2}, and \ref{lemma:r2_to_r3}.
    
    \begin{lemma}\label{lemma:r0_to_r1}
        For all $y \in \mathcal{M}$, 
        \begin{equation}\label{eq:comparing_r0_to_r1}
            |\rw^{\mathcal{D}}(y) - r_1(y)| \le O(\sqrt{\epsilon}).
        \end{equation}
        
    \end{lemma}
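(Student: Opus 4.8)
The plan is to exploit the fact that $f_{\mathcal{D}}$ and $f_1$ have identical structure and differ \emph{only} in their weight functions ($w_{\mathcal{M}}$ versus $w_1$), and then to convert a bound on how much these weights differ into a bound on how far apart the two minimizers are, using the strong convexity established in Lemma \ref{lemma:strong_convexity}.

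The first and geometrically most delicate step is to show that the weights are close: $|w_1(y) - w_{\mathcal{M}}(y)| = O(\epsilon)$ for every $y \in \mathcal{M}$. Recall from Definition \ref{def:weights_and_proj} that $w_{\mathcal{M}}(y)$ is, up to the normalization $1/|\mathcal{S}|$, the volume of the Voronoi cell of $y$ in $\mathcal{S}$. Adding the clone $x'$ with $\norm{x - x'} \le \epsilon$ can reassign a point $z$ away from its cell $V_{\mathcal{M}}(y)$ only if $z$ becomes closer to $x'$ than to $y$; since $z \in V_{\mathcal{M}}(y)$ gives $\norm{z-y} \le \norm{z - x}$ while $\norm{z - x'} \ge \norm{z - x} - \epsilon$, such a $z$ must satisfy $0 \le \norm{z - x} - \norm{z - y} \le \epsilon$, i.e. it lies in an $O(\epsilon)$-width slab around the bisector of $x$ and $y$. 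As $\mathcal{S}$ has finite volume and $\mathcal{M}$ is finite, each such slab has volume $O(\epsilon)$, so the total reassigned volume is $O(\epsilon)$; recombining the cells of $x$ and $x'$ into $w_1(x)$ then yields $|w_1(x) - w_{\mathcal{M}}(x)| = O(\epsilon)$, and $|w_1(y) - w_{\mathcal{M}}(y)| = O(\epsilon)$ for $y \ne x$.

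Next I would bound $|f_1(r) - f_{\mathcal{D}}(r)|$ at the two relevant points $r \in \{\rw^{\mathcal{D}}, r_1\}$. The difference is a sum over pairs of $(w_1(x_1)w_1(x_2) - w_{\mathcal{M}}(x_1)w_{\mathcal{M}}(x_2))$ times a log-likelihood term, plus $\frac{\lambda}{2}\sum_z (w_1(z) - w_{\mathcal{M}}(z)) r(z)^2$. For small enough $\epsilon$ the weights $w_1$ and $w_{\mathcal{M}}$ are bounded below by a positive constant depending only on $\mathcal{M}$, so Lemma \ref{lemma:properties_of_rw} bounds $|r(z)|$ uniformly by an $O(1)$ constant, which in turn bounds each log-likelihood term by an $O(1)$ constant. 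Each weight-product difference is $O(\epsilon)$ by the previous step, and there are finitely many terms, so $|f_1(r) - f_{\mathcal{D}}(r)| = O(\epsilon)$.

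Finally, I would combine these facts through strong convexity. Writing
\[
f_1(\rw^{\mathcal{D}}) - f_1(r_1) = \bigl(f_1(\rw^{\mathcal{D}}) - f_{\mathcal{D}}(\rw^{\mathcal{D}})\bigr) + \bigl(f_{\mathcal{D}}(\rw^{\mathcal{D}}) - f_{\mathcal{D}}(r_1)\bigr) + \bigl(f_{\mathcal{D}}(r_1) - f_1(r_1)\bigr),
\]
the first and third terms are $O(\epsilon)$ by the previous step, while the middle term is $\le 0$ because $\rw^{\mathcal{D}}$ minimizes $f_{\mathcal{D}}$; hence $f_1(\rw^{\mathcal{D}}) - f_1(r_1) = O(\epsilon)$. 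Since $r_1$ minimizes $f_1$, Lemma \ref{lemma:strong_convexity} gives $\frac{m_1}{2}\norm{\rw^{\mathcal{D}} - r_1}_2^2 \le f_1(\rw^{\mathcal{D}}) - f_1(r_1) = O(\epsilon)$ with $m_1 = \lambda \min_y w_1(y)$ a positive constant, so $\norm{\rw^{\mathcal{D}} - r_1}_2 = O(\sqrt{\epsilon})$ and in particular $|\rw^{\mathcal{D}}(y) - r_1(y)| = O(\sqrt{\epsilon})$ for every $y$, as claimed. I expect the main obstacle to be the geometric volume estimate of the first step (controlling the Voronoi-cell changes under the clone perturbation) together with verifying that all implicit $O(\cdot)$ constants depend only on $\mathcal{M}$ and $\lambda$, and not on $\epsilon$ or $\delta$, so that the choice $\epsilon = c\delta^2$ does drive the final bound below $\delta$.
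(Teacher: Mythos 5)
Your proof is correct and follows essentially the same route as the paper's: show $|w_1(y)-w_{\mathcal{M}}(y)| = O(\epsilon)$, deduce $|f_1(r)-f_{\mathcal{D}}(r)| = O(\epsilon)$ at the two minimizers using the uniform bound on $|r(y)|$ from Lemma \ref{lemma:properties_of_rw}, and convert the resulting $O(\epsilon)$ suboptimality gap into an $O(\sqrt{\epsilon})$ bound on the minimizers via the strong convexity of Lemma \ref{lemma:strong_convexity}. The only cosmetic differences are that you telescope directly and apply strong convexity of $f_1$ at $r_1$, whereas the paper runs the mirror-image argument by contradiction with $f_{\mathcal{D}}$ at $\rw^{\mathcal{D}}$, and that you spell out the Voronoi-slab volume estimate that the paper asserts without proof.
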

    \begin{proof}
        First, we note that we can choose $\epsilon$ to be sufficiently small and assume WLOG that $x$ is the closest alternative in $\mathcal{M}$ to $x'$. This implies that
        \[
             w_{\mathcal{M}}(x) \le w_1(x) \le w_{\mathcal{M}}(x) + O(\epsilon),
        \]
        and that for every $y \in \mathcal{M} \setminus \{x\}$ 
            \[
               w_{\mathcal{M}}(y) - O(\epsilon) \le   w_1(y) \le  w_{\mathcal{M}}(y).
            \]
            The previous two equations together imply that for any $y \in \mathcal{M}$, 
        \begin{equation}
            |w_{\mathcal{M}}(y) - w_1(y)| \le O(\epsilon).
        \end{equation}
       
        Note that we also have that for any $a,b \in \mathbb{R}$,
        \begin{equation}\label{eq:log_lower_bound}
               \log \left( \frac{e^{a}}{e^{a} + e^{b}}\right) \ge -(2 + |a| +| b|).
        \end{equation}

        Therefore, for any $r$ satisfying $|r(y) |\le C$ for all $y \in \mathcal{M}$ for some constant $C$, we have that  
          {\footnotesize
        \begin{align*}
            &f_1(r) - f_{\mathcal{D}}(r) \\
            &\ge -\sum_{x_1,x_2 \in \mathcal{M}} (w_1(x_1)w_1(x_2) - w_{\mathcal{M}}(x_1)w_{\mathcal{M}}(x_2))p_{\mathcal{D}}(x_1 \succ x_2) \log \left( \frac{e^{r(x_1)}}{e^{r(x_1)} + e^{r(x_2)}}\right)   + \frac{\lambda}{2} \sum_{y \in \mathcal{M}} (w_1(y) - w_\mathcal{M}(y))r(x)^2 \\
            &\ge -\sum_{x_1,x_2 \in \mathcal{M}} (-O(\epsilon) w_1(x_1) - O(\epsilon) w_1(x_2))p_{\mathcal{D}}(x_1 \succ x_2) \log \left( \frac{e^{r(x_1)}}{e^{r(x_1)} + e^{r(x_2)}}\right)   + \frac{\lambda}{2} \sum_{y \in \mathcal{M}} -O(\epsilon)r(x)^2 \\
            &= -\sum_{x_1,x_2 \in \mathcal{M}} O(\epsilon)( w_1(x_1) + w_1(x_2))p_{\mathcal{D}}(x_1 \succ x_2) \left(2 + |r(x_1)| +|r(x_2)|\right)   + \frac{\lambda}{2} \sum_{y \in \mathcal{M}} -O(\epsilon)r(x)^2 \\
            &= - O(\epsilon). \numberthis \label{eq:comparing_f1_to_f0_for_general_r}
        \end{align*}
        }
        Similarly, for any $r$ satisfying $|r(y)| \le C$ for all $y \in \mathcal{M}$ and some constant $C$, we have that
        \begin{equation}\label{eq:comparing_f1_to_f0_for_general_r2}
            f_1(r) - f_{\mathcal{D}}(r) \le O(\epsilon).
        \end{equation}
        
        By Lemma \ref{lemma:properties_of_rw}, we have for all $y \in \mathcal{M}$ that $|r_w^{\mathcal{D}}(y)| \le \sqrt{\frac{2+\lambda}{\lambda w_{\mathcal{M}}(y)}}$ and $|r_1(y)| \le \sqrt{\frac{2+\lambda}{\lambda w_1(y)}}$. Therefore, by Equations \eqref{eq:comparing_f1_to_f0_for_general_r} and \eqref{eq:comparing_f1_to_f0_for_general_r2}, we have that for constants $C_1,C_2$,
        \begin{equation}\label{eq:f1r1}
          f_1(r_1) -  f_{\mathcal{D}}(r_1)   \ge -C_1\epsilon
        \end{equation}
        and
        \begin{equation}\label{eq:fdrd}
           f_{\mathcal{D}}(r_w^{\mathcal{D}}) -  f_1(r_w^{\mathcal{D}})  \ge -C_2\epsilon.
        \end{equation}

        Now we will show that
        \begin{equation}\label{eq:fd_to_fd}
        f_{\mathcal{D}}(r_1) - f_{\mathcal{D}}(\rw^{\mathcal{D}}) \le (C_1 + C_2)\epsilon.
        \end{equation}
        Suppose Equation \eqref{eq:fd_to_fd} does not hold. Then
        \begin{align*}
            f_1(r_1) -  f_1(\rw^{\mathcal{D}}) &= \left(f_1(r_1) - f_{\mathcal{D}}(r_1)\right) + \left(f_{\mathcal{D}}(r_1) - f_{\mathcal{D}}(\rw^{\mathcal{D}})\right) + \left(f_{\mathcal{D}}(\rw^{\mathcal{D}}) - f_1(\rw^{\mathcal{D}})\right) \\
            &> -C_1\epsilon + (C_1 + C_2)\epsilon - C_2\epsilon && \text{[Eqs \eqref{eq:fdrd} and \eqref{eq:f1r1}]}\\
            &= 0. \numberthis \label{eq:suff_cond_to_not_be_r1}
        \end{align*}
        This is a contradiction with the definition of $r_1$, and therefore Equation \eqref{eq:fd_to_fd} must hold.
      
    We will now use the contrapositive of Equation \eqref{eq:fd_to_fd} to prove the desired result. Lemma \ref{lemma:strong_convexity} implies that for any $r$ such that there exists a $y \in \mathcal{M}$ where $|r(y) - \rw^{\mathcal{D}}(y)| > \sqrt{\frac{(C_1 + C_2)\epsilon}{\lambda \min_{z \in \mathcal{M}} w_\mathcal{M}(z)}}$, we must have
    \[
        f_{\mathcal{D}}(r) - f_{\mathcal{D}}(\rw^{\mathcal{D}}) > (C_1 + C_2)\epsilon.
    \]
    This combined with Equation \eqref{eq:fd_to_fd} implies  that  for all $y$, $|r(y) - \rw^{\mathcal{D}}(y)| \le \sqrt{\frac{(C_1 + C_2)\epsilon}{\lambda \min_{z \in \mathcal{M}} w_\mathcal{M}(z)}} = O(\sqrt{\epsilon})$, which is the desired result.
\end{proof}

      \begin{lemma}\label{lemma:r1_to_r2}
         For all $y \in \mathcal{M}$, $r_1(y) = r_2(y)$. Furthermore, $r_2(x') = r_2(x)$.
    \end{lemma}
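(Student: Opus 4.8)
The plan is to exploit the fact that both $f_1$ and $f_2$ are instances of the strongly convex objective in Lemma \ref{lemma:strong_convexity}, so each admits a unique minimizer pinned down by its stationarity condition. Repeating the gradient computation from the proof of Theorem \ref{thm:solution_of_weighted_mle} verbatim (with $\sigma(a) = e^a/(1+e^a)$), the minimizer $r_2$ of $f_2$ satisfies, for every $x_0 \in \mathcal{M}'$,
\[
\sum_{y \in \mathcal{M}'} w_{\mathcal{M}'}(y) p_2(x_0 \succ y) = \lambda r_2(x_0) + \sum_{y \in \mathcal{M}'} w_{\mathcal{M}'}(y)\sigma(r_2(x_0) - r_2(y)),
\]
and likewise $r_1$ satisfies the analogous identity over $\mathcal{M}$ with weights $w_1$ and probabilities $p_\mathcal{D}$. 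The entire proof then reduces to algebraic manipulation of these two stationarity systems.

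First I would establish $r_2(x') = r_2(x)$. Writing the stationarity condition for $x_0 = x$ and for $x_0 = x'$, I claim the two left-hand sides coincide: for $y \in \mathcal{M}\setminus\{x\}$ we have $p_2(x \succ y) = p_2(x' \succ y) = p_\mathcal{D}(x \succ y)$ by construction of $p_2$, while the remaining terms ($y = x$ and $y = x'$) each equal $\tfrac12$ times the corresponding weight, since $p_2(x \succ x') = p_2(x' \succ x) = p_\mathcal{D}(x \succ x) = \tfrac12$. Subtracting the two right-hand sides then gives $g(r_2(x)) = g(r_2(x'))$, where $g(t) := \lambda t + \sum_{y \in \mathcal{M}'} w_{\mathcal{M}'}(y)\sigma(t - r_2(y))$ is strictly increasing in $t$ (because $\lambda > 0$ and $\sigma$ is increasing). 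Injectivity of $g$ forces $r_2(x) = r_2(x')$.

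Next, to show $r_1(y) = r_2(y)$ for all $y \in \mathcal{M}$, I would verify that the restriction $\hat{r} := r_2|_{\mathcal{M}}$ satisfies the stationarity condition of $f_1$; uniqueness of the minimizer of $f_1$ (Lemma \ref{lemma:strong_convexity}) then yields $\hat{r} = r_1$. Starting from the $r_2$ stationarity identity at a fixed $x_0 \in \mathcal{M}$, I split every sum over $\mathcal{M}'$ into its $\mathcal{M}$-part plus the $y = x'$ term, and fold the $x'$ term into the $x$ term. On the left this uses $p_2(x_0 \succ x') = p_\mathcal{D}(x_0 \succ x)$ together with $w_{\mathcal{M}'}(x) + w_{\mathcal{M}'}(x') = w_1(x)$ and $w_{\mathcal{M}'}(y) = w_1(y)$ for $y \neq x$, collapsing the left side to $\sum_{y \in \mathcal{M}} w_1(y) p_\mathcal{D}(x_0 \succ y)$. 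On the right the merge uses the already-proved identity $r_2(x') = r_2(x)$, so that $\sigma(r_2(x_0) - r_2(x')) = \sigma(r_2(x_0) - r_2(x))$ and the two logistic terms combine with weight $w_1(x)$, collapsing the right side to $\lambda \hat{r}(x_0) + \sum_{y \in \mathcal{M}} w_1(y)\sigma(\hat{r}(x_0) - \hat{r}(y))$. This is exactly the $f_1$ stationarity condition for $\hat{r}$, completing the argument.

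The main obstacle is purely bookkeeping: carefully tracking the self-comparison terms (the value $\tfrac12$ on the diagonal and on the $x$–$x'$ pair) and correctly folding the weight and logistic contributions of the clone $x'$ back into the alternative $x$, in both the win-rate sum and the logistic sum. The one genuinely non-routine point is the monotonicity/injectivity argument for $g$, which is what upgrades the equality of right-hand sides into the pointwise equality $r_2(x) = r_2(x')$ and must be in place before the merging step in the final paragraph can be justified.
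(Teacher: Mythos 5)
Your proof is correct and takes essentially the same route as the paper's: both write down the stationarity systems via Theorem \ref{thm:solution_of_weighted_mle}, deduce $r_2(x) = r_2(x')$ from the fact that $x$ and $x'$ have identical left-hand sides (the paper asserts the resulting equation ``only has one solution,'' which your strictly increasing $g$ makes explicit), and then fold the $x'$ terms into the $x$ terms using $w_1(x) = w_{\mathcal{M}'}(x) + w_{\mathcal{M}'}(x')$ to show $r_2$ restricted to $\mathcal{M}$ satisfies the $f_1$ system, so $r_1 = r_2$ on $\mathcal{M}$ by uniqueness of the strongly convex minimizer. The only difference is expository: your monotonicity/injectivity argument and careful handling of the diagonal $\tfrac12$ terms spell out details the paper leaves implicit.
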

    \begin{proof}
    Define $\sigma(a) = \frac{e^a}{1+e^a}$. Theorem \ref{thm:solution_of_weighted_mle} implies that $r_1$ is the solution to the set of equations
        \begin{equation}
               \sum_{y \in \mathcal{M}} w_{1}(y)p_{\mc{D}}(z \succ y) = \lambda r_1(z) +   \sum_{y \in \mathcal{M}}w_{1}(y)\sigma(r_1(z) - r_1(y))  \quad \forall z \in \mathcal{M}.
    \end{equation}
    which by definition of $p_2$ is equivalent to being the solution to this set of equations:
    \begin{equation}
               \sum_{y \in \mathcal{M}} w_{1}(y)p_2(z \succ y) = \lambda r_1(z) +   \sum_{y \in \mathcal{M}}w_{1}(y)\sigma(r_1(z) - r_1(y))  \quad \forall z \in \mathcal{M}.
    \end{equation}
    Similarly, $r_2$ is the solution to the set of equations
    \begin{equation}\label{eq:r2_system_of_eqs}
               \sum_{y \in \mathcal{M}'} w_{\mathcal{M}'}(y)p_2(z \succ y) = \lambda r_2(z) +   \sum_{y \in \mathcal{M}'}w_{\mathcal{M}'}(y)\sigma(r_2(z) - r_2(y)) \quad \forall z \in \mathcal{M}'.
    \end{equation}
    Because  $p_2(x' \succ y) = p_2(x \succ y)$, we see that the LHS of Equation \eqref{eq:r2_system_of_eqs} is the same for $x$ and for $x'$. Therefore, $r_2(x)$ and $r_2(x')$ satisfy the same equation (which only has one solution), and therefore
    \[
        r_2(x) = r_2(x').
    \]
    Now we will show that this implies that $r_1(y) = r_2(y)$ for all $y \in \mathcal{M}$. By definition of $w_1$, for all $z$ we have that
    \begin{align*}
    \sum_{y \in \mathcal{M}} w_{1}(y)p_2(z \succ y) &=  w_{1}(x)p_2(z \succ x) +   \sum_{y \in \mathcal{M} \setminus x } w_{1}(y)p_2(z \succ y) \\
    &= (w_\mathcal{M'}(x) + w_{\mathcal{M'}}(x'))p_2(z \succ x)  +  \sum_{y \in \mathcal{M} \setminus x } w_{\mathcal{M}'}(y)p_2(z \succ y) \\
    &= \sum_{y \in \mathcal{M}'} w_{\mathcal{M}'}(y)p_2(z \succ y). \numberthis \label{eq:lhs_for_r2}
    \end{align*}
    Because $r_2(x) = r_2(x')$, we also have that for all $z$,
    \begin{align*}
        &\lambda r_2(z) +   \sum_{y \in \mathcal{M}'}w_{\mathcal{M}'}(y)\sigma(r_2(z) - r_2(y))  \\
        &=  \lambda r_2(z) + w_{\mathcal{M}' }(x)\sigma(r_2(z) - r_2(x)) + w_{\mathcal{M}' }(x')\sigma(r_2(z) - r_2(x')) +  \sum_{y \in \mathcal{M}'\setminus \{x, x'\}}w_{\mathcal{M}' }(y)\sigma(r_2(z) - r_2(y)) \\
        &=  \lambda r_2(z) + w_{\mathcal{M}' }(x)\sigma(r_2(z) - r_2(x)) + w_{\mathcal{M}' }(x')\sigma(r_2(z) - r_2(x)) + \sum_{y \in \mathcal{M}'\setminus \{x, x'\}}w_{\mathcal{M}' }(y)\sigma(r_2(z) - r_2(y)) \\
        &=  \lambda r_2(z) + (w_{\mathcal{M}' }(x) + w_{\mathcal{M}' }(x'))\sigma(r_2(z) - r_2(x)) + \sum_{y \in \mathcal{M}'\setminus \{x, x'\}}w_{\mathcal{M}' }(y)\sigma(r_2(z) - r_2(y)) \\
        &=  \lambda r_2(z) + w_1(x) \sigma(r_2(z) - r_2(x)) + \sum_{y \in \mathcal{M} \setminus \{x\}}w_{1 }(y)\sigma(r_2(z) - r_2(y)) \\
        &=  \lambda r_2(z) +  \sum_{y \in \mathcal{M} }w_{1 }(y)\sigma(r_2(z) - r_2(y)). \numberthis \label{eq:rhs_for_r2}
    \end{align*}
    Combining Equations \eqref{eq:r2_system_of_eqs}, \eqref{eq:lhs_for_r2}, and \eqref{eq:rhs_for_r2} gives that $r_2$ satisfies the equation
    \[
               \sum_{y \in \mathcal{M}} w_{1}(y)p_2(z \succ y) = \lambda r_2(z) +   \sum_{y \in \mathcal{M}}w_{1}(y)\sigma(r_2(z) - r_2(y))  \quad \forall z \in \mathcal{M}.
    \]
    This means that $r_2$ and $r_1$ are solutions to the same set of equations, and therefore $r_2(y) = r_1(y)$ for all $y \in \mathcal{M}$.
    \end{proof}

    Finally, we can relate $r_2$ to $\rw^{\mathcal{D}'}$.

    \begin{lemma}\label{lemma:r2_to_r3}
        For all $y \in \mathcal{M}'$, $|\rw^{\mathcal{D}'}(y) - r_2(y)| \le O(\sqrt{\epsilon})$.
    \end{lemma}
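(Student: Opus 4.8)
The crucial structural observation is that $f_2$ and $f_{\mathcal{D}'}$ are \emph{identical} except for their comparison-probability functions: both are built from the weights $w_{\mathcal{M}'}$ and the alternative set $\mathcal{M}'$, and they differ only in that $f_2$ uses $p_2$ whereas $f_{\mathcal{D}'}$ uses $p_{\mathcal{D}'}$. I would deliberately \emph{not} imitate the objective-gap/strong-convexity argument of Lemma \ref{lemma:r0_to_r1} here. The reason is that $f_{\mathcal{D}'}$ is strongly convex only with modulus $\lambda\min_{z}w_{\mathcal{M}'}(z)$, and inserting a clone next to $x$ can shrink the Voronoi weight $w_{\mathcal{M}'}(x)$ to $O(\epsilon)$ (exactly as the region of $(1,1)$ collapses in Figure \ref{fig:second_voronoi}); the modulus then degrades to $O(\epsilon)$, the objective gap is also $O(\epsilon)$, and the resulting bound $\norm{r_2-\rw^{\mathcal{D}'}}^2\le O(\epsilon)/O(\epsilon)=O(1)$ is vacuous. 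The fix is to argue through the M-estimator characterization of Theorem \ref{thm:solution_of_weighted_mle}, which yields a \emph{weight-independent} stability estimate.

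First I would reduce the comparison to a perturbed nonlinear system. Since the proof of Theorem \ref{thm:solution_of_weighted_mle} only uses that the comparison probabilities lie in $[0,1]$, both $r_2$ and $\rw^{\mathcal{D}'}$ solve $G(r)=b$ for the common map $G(r)_z = \lambda r(z)+\sum_{y\in\mathcal{M}'}w_{\mathcal{M}'}(y)\,\sigma(r(z)-r(y))$, with $\sigma(a)=\tfrac{e^a}{1+e^a}$, where the right-hand sides are the weighted win-rate vectors $b_2(z)=\sum_y w_{\mathcal{M}'}(y)p_2(z\succ y)$ and $b_{\mathcal{D}'}(z)=\sum_y w_{\mathcal{M}'}(y)p_{\mathcal{D}'}(z\succ y)$. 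Next I would show $\norm{b_2-b_{\mathcal{D}'}}_\infty=O(\epsilon)$. Because $\mathcal{D}$ and $\mathcal{D}'$ are representative, $p_2$ and $p_{\mathcal{D}'}$ agree exactly on all comparisons internal to $\mathcal{M}$, so $b_2$ and $b_{\mathcal{D}'}$ can differ only through comparisons involving $x'$. For these, representativeness gives $p_{\mathcal{D}'}(z\succ x')=p^*(z\succ x')$ and $p_2(z\succ x')=p^*(z\succ x)$; since $\sigma$ is $\tfrac14$-Lipschitz and Assumption \ref{assum_ind_players_lipschitz} gives $|r_i^*(x)-r_i^*(x')|\le K\norm{x-x'}_2\le K\epsilon$ for every annotator $i$, averaging over annotators yields $|p^*(z\succ x)-p^*(z\succ x')|\le \tfrac{K}{4}\epsilon$, and summing against weights that sum to $1$ gives the claimed $O(\epsilon)$ bound coordinatewise.

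The crux, and the step I expect to carry the real weight, is the weight-independent stability of $G$. Writing $b_2-b_{\mathcal{D}'}=G(r_2)-G(\rw^{\mathcal{D}'})=\bar J\,(r_2-\rw^{\mathcal{D}'})$ with $\bar J=\int_0^1 J\big(\rw^{\mathcal{D}'}+t(r_2-\rw^{\mathcal{D}'})\big)\,dt$ the averaged Jacobian, I would verify that every $J$ is strictly diagonally dominant with excess exactly $\lambda$, independently of the weights: one computes $J_{zz}=\lambda+\sum_{u\ne z}w_{\mathcal{M}'}(u)\sigma'(r(z)-r(u))$ and $J_{zu}=-w_{\mathcal{M}'}(u)\sigma'(r(z)-r(u))$ for $u\ne z$, whence $J_{zz}-\sum_{u\ne z}|J_{zu}|=\lambda$. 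Since $\sigma'\ge 0$ all off-diagonal entries share one sign, so this excess is preserved under averaging, and the standard bound for diagonally dominant matrices gives $\norm{\bar J^{-1}}_\infty\le 1/\lambda$. Therefore $\norm{r_2-\rw^{\mathcal{D}'}}_\infty\le \tfrac1\lambda\norm{b_2-b_{\mathcal{D}'}}_\infty=O(\epsilon)\le O(\sqrt\epsilon)$, which is the desired conclusion for all $y\in\mathcal{M}'$. The whole point of passing to the M-estimator system is that the $\sigma'$-terms carrying the small, clone-sensitive weights appear on both the diagonal and the off-diagonal of $J$ and cancel in the row-sum, leaving the clean modulus $\lambda$ supplied by the regularizer; this is precisely what lets the argument survive the collapse of $w_{\mathcal{M}'}(x)$ that defeats the naive strong-convexity approach.
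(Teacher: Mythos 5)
Your proof is correct, and it takes a genuinely different route from the paper's. The paper proves Lemma~\ref{lemma:r2_to_r3} with the same template as Lemma~\ref{lemma:r0_to_r1}: it bounds $|f_2(r)-f_{\mathcal{D}'}(r)|\le C_2\epsilon$ on the relevant bounded set (using the same representativeness-plus-Lipschitz estimate on $p_2$ versus $p_{\mathcal{D}'}$ that you use), deduces the objective gap $f_2(\rw^{\mathcal{D}'})-f_2(r_2)\le 2C_2\epsilon$ by contradiction with the optimality of $\rw^{\mathcal{D}'}$, and converts this into a reward bound via the strong convexity of Lemma~\ref{lemma:strong_convexity}. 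You instead pass to the first-order system of Theorem~\ref{thm:solution_of_weighted_mle} --- which, as you note, applies verbatim with $p_2$ in place of $p_{\mathcal{D}'}$, since its proof uses only $p\in[0,1]$ and the consistency $p(x_1\succ x_2)+p(x_2\succ x_1)=1$, which $p_2$ inherits from $p_{\mathcal{D}}$ --- bound the perturbation of the weighted win-rate vectors by $O(\epsilon)$, and invert the averaged Jacobian via row diagonal dominance with excess exactly $\lambda$ (your observation that $\sigma'>0$ keeps all off-diagonal entries of one sign for every $t$, so the excess survives the integration, is the right thing to check before invoking the Varah-type bound $\norm{\bar J^{-1}}_\infty\le 1/\lambda$). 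Your route buys two things. First, it is quantitatively stronger: you get $\norm{r_2-\rw^{\mathcal{D}'}}_\infty\le \tfrac{K}{4\lambda}\epsilon=O(\epsilon)$ rather than $O(\sqrt{\epsilon})$. Second, and more importantly, your diagnosis of the strong-convexity route is accurate and locates a genuine soft spot in the paper's own argument: Lemma~\ref{lemma:strong_convexity} applied to $f_2$ gives modulus $\lambda\min_{z\in\mathcal{M}'}w_{\mathcal{M}'}(z)$, and this minimum can indeed be $\Theta(\epsilon)$ (the bisector between $x$ and a clone at distance $\epsilon$ can cut a slab of thickness $\epsilon/2$ out of the Voronoi cell of $x$), in which case an $O(\epsilon)$ objective gap yields only an $O(1)$ reward bound; the paper's final display instead writes $\min_{z\in\mathcal{M}}w_{\mathcal{M}}(z)$, a constant independent of $\epsilon$, which is not the strong-convexity modulus of $f_2$. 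Since Definition~\ref{def:robust_to_approx_clones} quantifies $\epsilon$ before $x'$, constants may not depend on $w_{\mathcal{M}'}$, so your weight-independent estimate is not merely cosmetic --- it repairs the step. The only cosmetic additions I would make are to state the convention $p(z\succ z)=\tfrac12$ so the $y=z$ terms of the win-rate sums match on both sides, and to note that $J$ need not be symmetric, which is harmless because only row dominance is used.
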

    \begin{proof}
        Because $\mathcal{D}$ and $\mathcal{D}'$ are both representative datasets, by definition of $p_2$, for all $x_1,x_2 \in \mathcal{M}$ we have that $p_2(x_1 \succ x_2) = p_{\mc{D}}(x_1 \succ x_2) = p_{\mc{D}'}(x_1 \succ x_2)$. Therefore, only way in which $f_2(r)$ and $f_{\mathcal{D}'}(r)$ differ is that $p_{2}(y \succ x') = p_{\mathcal{D}}(y \succ x) \ne p_{\mathcal{D}'}(y \succ x')$ for all $y \in \mc{M}$ (and same for $p_2(x' \succ y)$). Because $\mathcal{D}$ and $\mathcal{D}'$ are both representative data sets, the preferences are Lipschitz continuous, and the BTL model is Lipschitz continuous, we have for some constant $C_1 >0$ that 
        \[
            |p_\mathcal{D}(x \succ y) - p_{\mathcal{D}'}(x' \succ y)| = |p_{\mathcal{D}'}(x \succ y) - p_{\mathcal{D}'}(x' \succ y)| \le C_1\epsilon.
        \]
        and
        \[
            |p_\mathcal{D}(y \succ x) - p_{\mathcal{D}'}(y \succ x')| = |p_{\mathcal{D}'}(y \succ x) - p_{\mathcal{D}'}(y \succ x')|  \le C_1\epsilon.
        \]

       Next, recall that by Lemma \ref{lemma:properties_of_rw}, $r_w^{\mc{D}'}$ and $r_2$ both are bounded by $\sqrt{\frac{2 + \lambda}{\lambda w_{\mathcal{M}'}}}$. For any $r$ satisfying $|r(y)| \le \sqrt{\frac{2 + \lambda}{\lambda w_{\mathcal{M}'}}}$, we also have that for some constant $C_2 > 0$,

 {\footnotesize
 \begin{align*}
        &f_{\mathcal{D}'}(r) - f_{2}(r) \\
        &= -\sum_{y \in \mathcal{M}} w_{\mathcal{M}'}(x')w_{\mathcal{M}'}(y)\left((p_{\mathcal{D}'}(x' \succ y)  - p_2(x' \succ y) )\sigma(r(x') - r(y)) + ( p_{\mathcal{D}'}(y \succ x') - p_2(y \succ x')) \sigma\left(r(y) - r(x')\right)\right) \\
        &= -\sum_{y \in \mathcal{M}} w_{\mathcal{M}'}(x')w_{\mathcal{M}'}(y)\left((p_{\mathcal{D}'}(x' \succ y)  - p_{\mathcal{D}}(x \succ y) )\sigma(r(x') - r(y)) + ( p_{\mathcal{D}'}(y \succ x') - p_{\mathcal{D}}(y \succ x)) \sigma\left(r(y) - r(x')\right)\right) \\
        &\le -C_1\epsilon\sum_{y \in \mathcal{M}} w_{\mathcal{M}'}(x')w_{\mathcal{M}'}(y)\left(\sigma(r(x') - r(y)) +   \sigma\left(r(y) - r(x')\right)\right) \\
        &\le C_1\epsilon\left(\sum_{y \in \mathcal{M}} w_{\mathcal{M}'}(x')w_{\mathcal{M}'}(y)\left(2 + |r(y)| + |r(x')|+2 + |r(y)| + |r(x')|\right)\right) \quad \quad \quad \quad \quad \text{[Equation \eqref{eq:log_lower_bound}]}\\
        &\le C_1\epsilon\left(\sum_{y \in \mathcal{M}} w_{\mathcal{M}'}(x')w_{\mathcal{M}'}(y)\left(4 + 4\sqrt{\frac{2 + \lambda}{\lambda w_{\mathcal{M}'}}}\right)\right) \\
        &\le  C_2 \epsilon. \numberthis \label{eq:fd_to_fr}
        \end{align*}
}
    By the same logic we also have that
    \begin{equation}\label{eq:fd_to_fr2}
        f_{2}(r)  - f_{\mathcal{D}'}(r)  \le C_2 \epsilon.
    \end{equation}
        Note that Equations \eqref{eq:fd_to_fr} and \eqref{eq:fd_to_fr2} hold for $r = r_w^{\mathcal{D}'}$ and $r = r_2$ by Lemma \ref{lemma:properties_of_rw}, 
        Next we will show that 
        \begin{equation}\label{eq:f2_for_r2}
             f_2(r_w^{\mathcal{D}'}) - f_2(r_2) \le 2C_2\epsilon.
        \end{equation}
        Assume Equation \eqref{eq:f2_for_r2} does not hold. Then we have that 
        \begin{align*}
            f_{\mathcal{D}'}(r_w^{\mathcal{D}'}) - f_{\mathcal{D}'}(r_2) &= (f_{\mathcal{D}'}(r_w^{\mathcal{D}'}) - f_2(r_w^{\mathcal{D}'})) + (f_2(r_w^{\mathcal{D}'}) - f_2(r_2)) + (f_2(r_2) - f_{\mathcal{D}'}(r_2)) \\
            &> -C_2\epsilon  + 2C_2\epsilon -C_2\epsilon \\
            &= 0.
        \end{align*}
        However, this is a contradiction because by definition of $r_w^{\mathcal{D}'}$, we must have that $f_{\mathcal{D}'}(r_w^{\mathcal{D}'}) - f_{\mathcal{D}'}(r_1) \le 0$. Therefore, we have shown that Equation \eqref{eq:f2_for_r2} holds.

        Lemma \ref{lemma:strong_convexity} implies that for any $r$ such that there exists a $y \in \mathcal{M}'$ such that $|r(y) - r_2(y)| > \sqrt{\frac{2C_2\epsilon}{\lambda \min_{z \in \mathcal{M}} w_{\mathcal{M}}(z)}}$, we must have that
        \[
            f_2(r) - f_2(r_2) > 2C_2\epsilon.
        \]
        The contrapositive of the previous statement combined with Equation \eqref{eq:f2_for_r2} implies that for all $y$, we must have
        \[
            |r^{\mathcal{D}'}_w(y) - r_2(y)| \le \sqrt{\frac{2C_2\epsilon}{\lambda \min_{z \in \mathcal{M}} w_{\mathcal{M}}(z)}} = O(\sqrt{\epsilon}).
        \]
    \end{proof}

\end{proof}

\section{Proof of Theorem \ref{thm:weighted_mle_as_S_mle}}\label{app:weighted_mle_as_S_mle}

\begin{proof}
{\footnotesize
    \begin{align*}
                    f_{\mathcal{D}}(r) &=  -\sum_{x_1,x_2 \in \mathcal{M}} \Bigg( w_{\mathcal{M}}(x_1)w_{\mathcal{M}}(x_2)p_\mathcal{D}(x_1 \succ x_2) \times \log \left( \frac{e^{r(x_1)}}{e^{r(x_1)} + e^{r(x_2)}}\right) \Bigg) +  \frac{\lambda}{2}\sum_{x \in \mathcal{M}} w_{\mathcal{M}}(x)r(x)^2 \\
                    &= -\sum_{x_1,x_2 \in \mathcal{M}} \Bigg( \left(\frac{1}{|\mathcal{S}|}\int_{y_1 \in \mathcal{S}}\frac{1_{x_1 \in \mathrm{proj}_{\mathcal{M}}(y_1)} }{|\mathrm{proj}_{\mathcal{M}}(y_1)|}dy_1 \right)\left(\frac{1}{|\mathcal{S}|}\int_{y_2 \in \mathcal{S}}\frac{1_{x_2 \in \mathrm{proj}_{\mathcal{M}}(y_2)} }{|\mathrm{proj}_{\mathcal{M}}(y_2)|}dy_2 \right)p_\mathcal{D}(x_1 \succ x_2) \times \log \left( \frac{e^{r(x_1)}}{e^{r(x_1)} + e^{r(x_2)}}\right) \Bigg) \\
                    &\quad +  \frac{\lambda}{2}\sum_{x \in \mathcal{M}} \left(\frac{1}{|\mathcal{S}|}\int_{y \in \mathcal{S}}\frac{1_{x \in \mathrm{proj}_{\mathcal{M}}(y)} }{|\mathrm{proj}_{\mathcal{M}}(y)|}dy \right) r(x)^2 \\
                    &= -\frac{1}{|\mathcal{S}|^2}\int_{y_1,y_2 \in \mathcal{S}} \left(\sum_{x_1,x_2 \in \mathcal{M}} \Bigg( \left(\frac{1_{x_1 \in \mathrm{proj}_{\mathcal{M}}(y_1)} }{|\mathrm{proj}_{\mathcal{M}}(y_1)|}\right)\left(\frac{1_{x_2 \in \mathrm{proj}_{\mathcal{M}}(y_2)} }{|\mathrm{proj}_{\mathcal{M}}(y_2)|}\right)p_\mathcal{D}(x_1 \succ x_2) \times \log \left( \frac{e^{r(x_1)}}{e^{r(x_1)} + e^{r(x_2)}}\right) \Bigg)\right)dy_2dy_2 \\
                    &\quad +  \frac{1}{|\mathcal{S}|}\frac{\lambda}{2}\int_{y \in \mathcal{S}} \left(\sum_{x \in \mathcal{M}} \left(\frac{1_{x \in \mathrm{proj}_{\mathcal{M}}(y)} }{|\mathrm{proj}_{\mathcal{M}}(y)|} \right) r(x)^2 \right)dy\\
                    &= -\frac{1}{|\mathcal{S}|^2}\int_{y_1,y_2 \in \mathcal{S}} \left(\sum_{\substack{{x_1 \in \mathrm{proj}_\mathcal{M}(y_1)} \\ {x_2 \in \mathrm{proj}_\mathcal{M}(y_2)}}} \Bigg( \left(\frac{1}{|\mathrm{proj}_{\mathcal{M}}(y_1)||\mathrm{proj}_{\mathcal{M}}(y_2)|}\right)p_\mathcal{D}(x_1 \succ x_2) \times \log \left( \frac{e^{r(x_1)}}{e^{r(x_1)} + e^{r(x_2)}}\right) \Bigg)\right)dy_2dy_2 \\
                    &\quad +  \frac{1}{|\mathcal{S}|}\frac{\lambda}{2}\int_{y \in \mathcal{S}} \left(\sum_{x \in \mathrm{proj}_{\mathcal{M}}(y)} \left(\frac{1}{|\mathrm{proj}_{\mathcal{M}}(y)|} \right) r(x)^2 \right)dy \\
                    &=             f_{\mathcal{D}}(r)   =  - \frac{1}{|\mathcal{S}|^2}\int_{y_1,y_2 \in S} \overline{\mathcal{L}}(y_1, y_2)dy_1dy_2 + \frac{\lambda}{2|\mathcal{S}|}\int_{y \in \mathcal{S}} \overline{r^2}(y)dy.
    \end{align*}
    }
\end{proof}
    
\section{Additional Case Study Details}\label{app:case_study}

\subsection{Annotator Types}\label{app:case_study_annotators}

We use OpenAI’s gpt-4o-mini model to simulate human annotators with diverse preferences. There are three populations of annotators, each of which attach a different amount of importance to seeing `food’, `art’, and `romance’ mentioned in a description of Paris. Each of the annotator types and their prevalence in the overall population are shown in the table below.

    \begin{table}[ht]
        \centering
        \begin{tabular}{c|c c c}
             Alternative & Type 1 (40\%) & Type 2 (30\%) & Type 3 (30\%) \\
             \hline
             Romance & `really like' & `mildly interested in' & `very uninterested in' \\
             Art & `mildly interested in'  & `very uninterested in'  & `really like' \\
             Food & `very uninterested in'  & `really like' & `mildly interested in'\\
        \end{tabular}
        \caption{Representation of annotator population. The prevalence of each type of annotator is indicated in the header row, while the importance that each type of annotator attaches to each category is presented in the matrix.}
        \label{tab:annotator_types}
    \end{table}

\subsection{Case Study Preference Dataset Generation}\label{app:case_study_dataset_generation}

 In the dataset `Original', each preference datum is generated in the following manner. First, we randomly select (with replacement) two categories out of [‘food’, ‘art’, ‘romance’] and ask OpenAI’s gpt-4o-mini model to generate one response each to the prompt ‘briefly describe Paris with a focus on [category]’. Next, we randomly select one annotator type with probability equal to their prevalence in the overall population. We then instruct the model to act as the selected annotator type and report a preference between the two responses. The dataset with a cloned alternative is generated in the same manner, except we randomly select (with replacement) two categories out of [‘food’, ‘art’, ‘romance’, ‘restaurants’] instead of simply the first three. Note that the annotator types remain the same. Each dataset contains $1000$ data points. Full prompts and sample responses for all interactions with gpt-4o-mini are given below.

\begin{promptbox}{Prompt}
You are a helpful assistant. Briefly describe Paris with a focus on food, and highlight one activity.
\end{promptbox}

\begin{responsebox}
Paris, often referred to as the "City of Light," is renowned for its rich culinary heritage and vibrant food scene. The city is a paradise for food lovers, offering a delightful blend of traditional French cuisine and innovative gastronomy. Iconic dishes such as coq au vin, bouillabaisse, and ratatouille can be found in charming bistros and upscale restaurants alike. Paris is also famous for its patisseries, where you can indulge in exquisite pastries like macarons, éclairs, and the decadent croissant.
\end{responsebox}

\begin{promptbox}{Prompt}
You are a helpful assistant. Briefly describe Paris with a focus on art, and highlight one activity.
\end{promptbox}

\begin{responsebox}
Paris, often regarded as the capital of art and culture, is a city steeped in history and creativity. It boasts renowned museums like the Louvre, which houses masterpieces such as the Mona Lisa and the Venus de Milo, as well as the Musée d'Orsay, famous for its collection of Impressionist works. The city’s streets are filled with iconic artistic sites, from the Montmartre district, once home to artists like Picasso and Van Gogh, to contemporary galleries in the Marais.
\end{responsebox}

\begin{promptbox}{Prompt}
You are visiting Paris. You really like food, are mildly interested in romance, and are very uninterested in art. You are given two answers to the question, 'Describe Paris'. Which do you prefer? Respond with only (1) or (2). \vspace{1em}

(1) Paris, often referred to as the "City of Light," is renowned for its rich culinary heritage and vibrant food scene. The city is a paradise for food lovers, offering a delightful blend of traditional French cuisine and innovative gastronomy. Iconic dishes such as coq au vin, bouillabaisse, and ratatouille can be found in charming bistros and upscale restaurants alike. Paris is also famous for its patisseries, where you can indulge in exquisite pastries like macarons, éclairs, and the decadent croissant. \vspace{1em}

(2) Paris, often regarded as the capital of art and culture, is a city steeped in history and creativity. It boasts renowned museums like the Louvre, which houses masterpieces such as the Mona Lisa and the Venus de Milo, as well as the Musée d'Orsay, famous for its collection of Impressionist works. The city’s streets are filled with iconic artistic sites, from the Montmartre district, once home to artists like Picasso and Van Gogh, to contemporary galleries in the Marais.
\end{promptbox}

\begin{responsebox}
(1)
\end{responsebox}

\subsection{Extracting Context and Neural Network Training}\label{app:case_study_neural_network}

We approximate both the standard MLE algorithm and the weighted MLE algorithm using neural networks. Each neural network takes as input a context vector and outputs a reward value. To generate the context vectors, we first use OpenAI’s text-embedding-3-small model to extract embedding vectors from the textual descriptions of Paris in each dataset. For each dataset, we then conduct principal component analysis (PCA) on the associated embedding vectors using the PCA class from sklearn.decomposition, setting the number of components equal to $1500$. The PCA-transformed data is then given as input to the neural networks. The neural network we used had 2 layers, and each hidden layer had size 32 (and output size 1). For training we used the Adam optimizer with a learning rate of $10^{-4}$ and a batch size of 512 and implemented the training using PyTorch. We trained  the neural networks for 500 steps and then averaged the results over 20 runs to form the graphs included in this paper.

\subsection{Weight Estimation}

To estimate the weights for the weighted MLE, we sampled 100000 random vectors within the chosen space $S$ and computed the weight for each of the alternatives in the dataset. In Figure \ref{fig:weighted_MLE} we chose $\mathcal{S}$ to be the unit cube in the context vector space. Another natural choice of $\mathcal{S}$ is any vector such that every coordinate of that vector is within a factor of $2$ of the corresponding coordinate in one of the vectors in the observed set of alternatives. By restricting to vectors with coordinates close to those in the observed set of alternatives, this is potentially a more suitable choice for $\mathcal{S}$ than choosing $\mathcal{S}$ to be the entire unit cube. This is because the unit cube may include many alternatives that are not reasonable answers to the question. While there are arguments for many different choices of $\mathcal{S}$, we also analyze this choice of $\mathcal{S}$ to demonstrate the robustness of the weighted MLE. The results for this choice of $\mathcal{S}$ are shown below in Figure \ref{fig:weighted_MLE_neighbors}, and the weighted MLE remains robust to the addition of approximate clones.

\begin{figure}[ht] 
    \centering
    \includegraphics[width=0.4\textwidth]{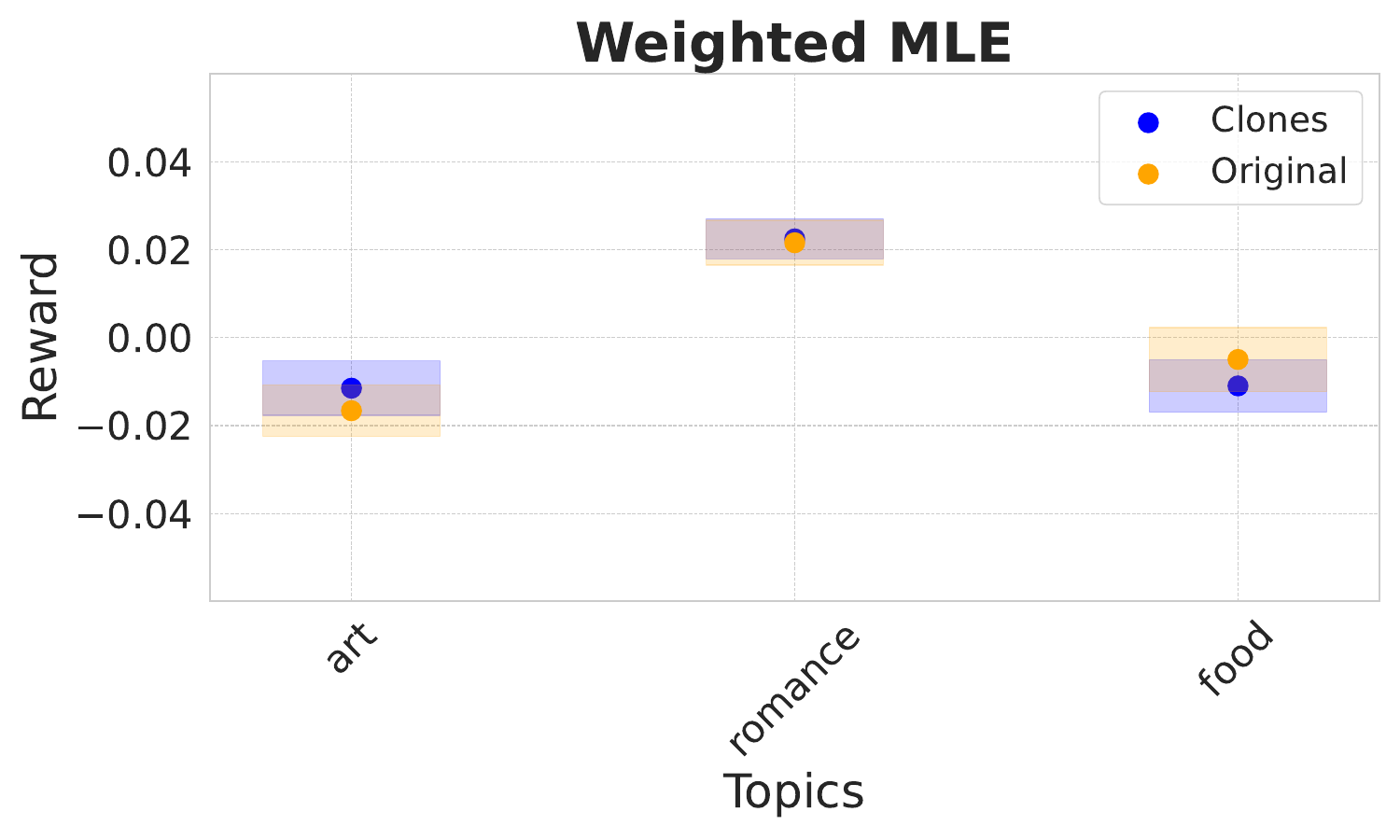} 
    \caption{Results for the weighted MLE when $\mathcal{S}$ is chosen as any vector such that every coordinate is within a factor of $2$ of one of the observed coordinates. The yellow points show the average value of the weighted MLE reward function for different topics when trained on dataset `Original'. The blue points show the same but when trained on dataset `Clones'. In both cases, the reward function has the highest value for romance, demonstrating the robustness of the weighted MLE.}
    \label{fig:weighted_MLE_neighbors}
\end{figure}

\end{document}